\newtheorem{theorem}{Theorem}
\newtheorem{lemma}{Lemma}
\newtheorem{assumption}{Assumption}
\newtheorem{definition}{Definition}
\newcommand{\E}{\mathbb{E}}
\title{Thompson Sampling for Repeated Newsvendor}
\author{Li Chen, Hanzhang Qin, Yunbei Xu, Ruihao Zhu, Weizhou Zhang}
\date{}
\begin{document}

\maketitle

\begin{abstract}
In this paper, we investigate the performance of Thompson Sampling (TS) for online learning with censored feedback, focusing primarily on the classic repeated newsvendor model--a foundational framework in inventory management--and demonstrating how our techniques can be naturally extended to a broader class of problems. We first model demand using a Weibull distribution and initialize TS with a Gamma prior to dynamically adjust order quantities. Our analysis establishes optimal (up to logarithmic factors) frequentist regret bounds for TS without imposing restrictive prior assumptions. More importantly, it yields novel and highly interpretable insights on how TS addresses the exploration-exploitation trade-off in the repeated newsvendor setting. Specifically, our results show that when past order quantities are sufficiently large to overcome censoring, TS accurately estimates the unknown demand parameters, leading to near-optimal ordering decisions. Conversely, when past orders are relatively small, TS automatically increases future order quantities to gather additional demand information. Then, we extend our analysis to general parametric distribution family and provide proof for Bayesian regret. Extensive numerical simulations further demonstrate that TS outperforms more conservative and widely-used approaches such as online convex optimization, upper confidence bounds, and myopic Bayesian dynamic programming. 
\end{abstract}

\section{Introduction}
The repeated newsvendor problem is a classic framework in the operations management literature \citep{huh2009nonparametric,besbes2022exploration}. In this problem, a decision-maker must repeatedly chooses how much quantity to stock in each period without knowing the true demand distribution. After each period, the decision-maker observes only censored feedback. That is, the decision-maker only sees how many units were sold (up to the stocking level) but do not learn whether additional demand went unmet once the inventory ran out. 
There is a trade-off inherent from this problem between exploration and exploitation: 
\begin{enumerate}
    \item Exploration: stocking more inventory than necessary to gather more information about tail distribution of demand. However doing so may cause the problem of overstocking and incur more holding cost at warehouse. 
    \item Exploitation: order the quantity based on the current estimation of demand so as to minimize the holding cost but doing so may incur lost-sales penalty and fail to gather valuable information of demand distribution, which can cause suboptimal inventory decision in the future. 
\end{enumerate}
More broadly, the repeated newsvendor problem serves as a key representative of a broader class of problems referred to as {\bf ``online learning with censored feedback.''} In this setting, the observation is always the minimum of an unknown random variable and the chosen action. For instance, in the newsvendor problem, the censored feedback corresponds to the minimum of the demand and the stock order. Similarly, in an auction, it is given by the minimum of the buyer's willingness to pay and the seller's set price. These problems inherently exhibit a trade-off between {\bf ``large exploration''}—choosing a sufficiently large action to better observe demand or willingness to pay for more accurate estimation—and {\bf ``optimal exploitation''}—making the right decision to minimize regret. While this paper primarily focuses on the repeated newsvendor problem, we also take an initial step toward systematically exploring the broader class of online learning problems with censored feedback. Existing studies on the repeated newsvendor problem have established a 
$\sqrt{T}$-regret bound under fairly general unknown demand distributions and censored feedback, often leveraging the online convex optimization (OCO) framework \cite{huh2009nonparametric}. However, as widely recognized in the bandit and online learning literature \cite{chapelle2011empirical, seldin2014one, xu2023bayesian}, TS often outperforms OCO-based approaches (which were originally developed for adversarial online learning) as well as other methods such as the Upper Confidence Bound (UCB). These advantages are supported by extensive numerical experiments and theoretical analyses in the aforementioned studies, as well as in our own work. This motivates us to adopt TS as the preferred approach for the repeated newsvendor problem and beyond.

\subsection{Main Contributions}\label{subsec:main messages}
In this paper, we investigate an online learning problem with censored feedback using the classic newsvendor model--one of the most fundamental frameworks in inventory management--as our pivotal example. Specifically, we consider a setting where the true demand $D_t$	
  is unknown, the action $y_t$
  is the order quantity, and the observation $Y_t$	
  is censored feedback given by $ Y_t=\min\{D_t, y_t\}.$
Where demand is exactly observed when sales are less than the order quantity, that is, when $D_t<y_t$; and the demand is censored at the order quantity when sales equal $y_t$, that is, when $D_t \geq y_t$. The newsvendor setting experiences censored feedback--the decision-maker never observes lost-sales if demand exceeds the order quantity. This makes it difficult to accurately estimate demand, as it requires finding the right balance between not ordering too much to prevent excess inventory and placing larger orders to better understand how much demand is actually being missed (i.e., the afore-mentioned exploration-exploitation trade-off). To address this trade-off, One the one side, Our analysis focuses on Bayesian regret analysis for general demand distribution. On the other side, We also establish frequentist guarantees under Weibull distribution--a flexible and widely used parametric family. Our key insights include:
\subsubsection{Frequentist under Weibull Distribution}

\paragraph{\textbf{Estimation under Censored Feedback}}
Regardless of the algorithm used, we derive the confidence interval for demand estimation under censored feedback $Y_t$. The estimation error at round $t$ scales inversely with $\sum_{i=1}^{t-1}(1 - e^{\theta_{\star} y_i^k})$, where $\theta_{\star}$ and $k$ are the scale and shape parameters of the Weibull distribution. This provides a rigorous quantification of how smaller past actions lead to larger errors and highlights the critical trade-off between large exploration and optimal exploitation.

\paragraph{\textbf{Automatic Compensation via TS}}  
From the closed-form expression of TS under Weibull demand, we derive a key insight:
\begin{itemize}
\item When past actions (order quantities) are sufficiently {\it large}, the observed data is more likely to be uncensored and can provide accurate information about the demand’s upper tail. This enables precise estimation of the Weibull parameters and near-optimal ordering decisions.

\item When past actions are relatively {\it small}, TS naturally pushes future actions higher, preventing the algorithm from being stuck with poor estimates. This ensures systematic exploration of larger actions to refine demand knowledge and improve future decisions.
\end{itemize}
In essence, {\it large actions enhance estimation accuracy, while small actions drive future TS-selected actions higher}. This adaptive mechanism allows TS to balance learning and cost minimization, avoiding suboptimal ordering.

\paragraph{\textbf{Balancing Exploration and Exploitation in a Frequentist Setting}}
Despite the Bayesian flavor of TS, we show a frequentist regret bound. In particular, with an  initialization of Gamma prior on the Weibull parameters, TS implicitly achieves the balance between exploration and exploitation, even when we have no prior knowledge of the actual demand parameters. This balance arises because TS automatically change its exploration strategy according to its level of uncertainty in its posterior estimates. As more data is observed, TS naturally puts more weight toward exploitation, which improves estimation accuracy while still allows for occasional exploration to occurs.
\subsubsection{Bayesian Regret under General Parametric Families}
Beyond the Weibull demand, we extend our analysis to general parametric family. To handle censored observations, we estimate the true demand CDF using the  Kaplan–Meier (KM) estimator and construct a plug-in estimator that selects the closest parametric fit to the  KM estimator. Those allows us to establish the a upper confidence interval for Bayesian estimator. Finally we  bound the bayesian regret through Lipschitz property of the newsvendor cost.

\subsubsection{Empirical Effectiveness}
We conduct extensive numerical experiments demonstrating that TS yields competitive performance in terms of cumulative regret, outperforming existing widely-used approaches such as OCO, UCB, and myopic Bayesian dynamic programming. These experiments confirm the widely recognized belief on the effectiveness of TS in online learning and bandit literature and practical applications \cite{chapelle2011empirical, seldin2014one, xu2023bayesian}.


\subsection{Related Work}\label{sec:related work}

\paragraph{\textbf{Bayesian Dynamic Programming Literature}}  
The first stream of research formulates this problem using an offline dynamic programming (DP) approach, typically solved via backward induction. However, backward induction often suffers from the curse of dimensionality, making it computationally intractable for large-scale problems. Consequently, existing literature focuses on heuristic solutions as approximations to the optimal policy. \cite{chen2010bounds} propose heuristics based on bounds of Bayesian DP-optimal decisions. Another policy that Bayesian DP literature adopts is a myopic policy, where the decision-maker optimizes inventory decisions one period at a time. This myopic approach has been widely studied in inventory management (see \cite{kamath2002bayesian}, \cite{dehoratius2008retail}, \cite{bisi2011censored}, \cite{besbes2022exploration}, \cite{chuang2023bayesian}). Our approach differs by benchmarking against the ground truth policy, whereas Bayesian DP-based approaches compare against Bayesian DP-optimal policies. In the frequentist setting, the ground truth policy corresponds to the true demand parameter ($\theta^*$). This distinction in benchmarking leads to a fundamentally different regret characterization. Unlike Bayesian DP policies, our method ensures that the regret bound scales as ($\sqrt{T}$). Compared to offline Bayesian DP methods, our work employs TS to learn the unknown demand parameter, providing a simpler and more computationally efficient alternative. Instead of requiring full-state space formulation and solving for an optimal policy via backward induction, our approach dynamically learns the demand distribution while simultaneously making inventory decisions. TS offers a practical solution for real-time decision-making, balancing exploration and exploitation without requiring predefined state transitions.


\paragraph{\textbf{Thompson Sampling Regret Analysis}}  
In this section, we highlight how our TS regret analysis differs from previous approaches, such as those in \cite{russo2014learning} and \cite{russo2016information}. Specifically, we leverage the problem structure to reformulate regret analysis in terms of the convergence of the posterior parameter, providing a more structured and interpretable framework for regret analysis. A key distinction between our work and \cite{russo2014learning} lies in how exploration and exploitation are handled. Unlike UCB-based methods, which construct deterministic confidence intervals to manage the exploration-exploitation trade-off, TS operates in a Bayesian framework, dynamically updating the posterior distribution based on observed data. By sampling from the posterior distribution, TS inherently balances the need to explore suboptimal actions to gather information and the desire to exploit actions that currently appear optimal.
Additionally, our analysis differs from the information-theoretic regret framework of \cite{russo2016information}, which relies on the concept of the information ratio to bound regret. While this approach has been successfully applied to fully observed bandit problems, it is not directly applicable to our setting, where demand is censored. In censored demand environments, the information ratio is difficult to compute due to missing observations on lost-sales. Instead, our analysis uses the specific structural properties of the newsvendor problem with censored demand. Unlike existing methods that focus on confidence-based or information-theoretic approaches, we introduce a novel regret analysis that directly links regret minimization to the convergence of the posterior distribution.


\section{Model Setup and the Thompson Sampling Algorithm}
\label{sec:Preliminaries and Model Setup}
In this section, we discuss the repeated newsvendor model setup and the associated TS algorithm in detail. 
\subsection{Repeated Newsvendor Model}
Following the setup by \cite{bisi2011censored} and \cite{chuang2023bayesian},
we consider a Repeated Newsvendor Model in which a retailer sells a single perishable product over a discrete and finite decision horizon. A Bayesian repeated newsvendor Model can be defined as a tuple $(T,f_{\theta_{\star}}(\cdot),\rho_0(\cdot),h,p)$, where $T \in \mathbb{R}^+$ is the known length of decision horizon, $f_{\theta_{\star}}(\cdot)$ is the known class of demand distributions, parameterized by an unknown parameter $\theta_{\star}$. We define the expression of $f_{\theta_{\star}}(\cdot)$ and $\rho_0(\cdot)$ in the next subsection. $h>0$ is the unit overage cost, and $p>0$ is the unit stock-out penalty. $h$ occurs if there is any leftover. $p$ occurs if there is any unmet demand. 
The dynamic is defined as follows. Before the decision-making process, the parameter $\theta_{\star}$ is unknown.
At time $t \in [T]$, three events happen sequentially:
\begin{enumerate}
    \item The retailer determines an order quantity $y_t \geq 0$.
    \item The demand $D_t$ is i.i.d generated from demand distribution $f_{\theta_{\star}}(\cdot)$.
    \item Lost-sales are not observed, demand $D_t$ are censored on the right by the inventory levels $y_t$. The retailer only observes the data pairs $\left(Y_t, \delta_t\right)$, where $Y_t =D_t \wedge y_t$ and $\delta_t =1 \left[D_t <  y_t \right]$. interpreted as the number of exact observations of demand. Where demand is exactly observed when sales are less than the order quantity, that is, when $D_t<y_t$; and the demand is censored at the order quantity when sales equal $y_t$, that is, when $D_t \geq y_t$.
    
    The expected cost incurred at time step $t$ is   
    \begin{equation}
     \label{cost_function}
        g(y_t,D_t)=\E\left[h\left(y_t-D_t\right)^{+}+p\left(D_t-y_t\right)^{+}\right].
    \end{equation}
\end{enumerate}
The retailer knows the length of horizon $T$, the class of demand distributions $f_{\theta}(\cdot)$, the prior distribution $\rho_0$, $h$ and $p$, but does not know the exact value of $\theta_{\star}$.

According to \cite{chuang2023bayesian}.
We denote $H = \left\{H_t \right\}$ the natural filtration generated by the right-censored sales data, i.e $H_t=\sigma  \left\{(Y_i,\delta_i) : i \leq t\right\} $, where $Y_t =D_t \wedge y_t$ and $\delta_t =1 \left[D_t <  y_t \right]$. The retailer chooses an action $y_t$. The retailer aims to minimize the total expected cost in the $T$- period
online phase. We quantify the performance guarantee of the DM's non-anticipatory policy $\pi$ by its regret. We define regret as $\operatorname{Regret(T, \pi,\theta_{\star})}$ be the regret with respect to a fixed $\theta_{\star}$. 

\begin{definition}
\label{def:freq regret}
\begin{align*}
 & \operatorname{Regret(T, \pi,\theta_{\star})}=\mathbb{E}\left[\sum_{t=1}^Tg\left(y_t, D_t\right) - \sum_{t=1}^T g\left(y_{\star}, D_{t}\right)\mid 
 \theta_{\star}\right].  
\end{align*}
\end{definition}
\begin{definition}
\label{def:bayeregret}
\begin{align*}
 & \operatorname{BayeRegret(T, \pi)}=\mathbb{E}\left[\sum_{t=1}^Tg\left(y_t, D_t\right) - \sum_{t=1}^T g\left(y_{\star}, D_{t}\right)\right].  
\end{align*}
\end{definition}
For simplicity, throughout the paper we abbreviate $\operatorname{Regret(T, \pi,\theta_{\star})}$  as $\operatorname{Regret(T,\theta_{\star})}$. 
The optimal order quantity is given by
$
    y_{\star} = \mathop{\arg\max}_y \  \E\left[h\left(y-D_t\right)^{+}+p\left(D_t-y\right)^{+}\right] = F^{-1}_{\theta_{\star}}\left(\frac{p}{p+h} \right).$
which corresponds to the critical quantile of the demand distribution when $\theta_{\star}$ is known.





\subsection{Preliminaries}
In this section, we introduce the necessary tools to implement TS.

\vspace{2mm}
\noindent\textbf{Demand Distribution: Newsvendor Family.} The newsvendor (newsboy) family, introduced by \cite{braden1991informational}, is known to be the only family whose posterior distributions with censored demand information have conjugate priors. A random variable is a member of the newsvendor distributions if its density is given by $
 f_{\theta}(x )=\theta d^{\prime}(x) e^{-\theta d(x)}, \quad F_{\theta}(x)=1-e^{-\theta d(x)},
$
where $d^{\prime}(x)>0, \ \forall x>0$, so $f_{\theta}(x )$ is positive on $(0, \infty)$
 $\lim _{x \rightarrow 0} d(x)=0$ and $\lim _{x \rightarrow \infty} d(x)=\infty$. So $F_\theta(x)$ is a valid probability distribution,
where $d(x)$ is a positive, differentiable, and increasing function and $\theta \in R_{+}$. \cite{lariviere1999stalking} show that when the demand distribution is Weibull with a gamma prior, the optimal solution for repeated newsvendor problem admits a closed form. 

\vspace{2mm}
\noindent\textbf{Prior Distribution and Parametric Demand.} With the true value of $\theta_{\star}$ being unknown, the decision maker initiates TS with a prior distribution $\rho_0$ at the outset. Throughout the paper, we adopt the prior family and parametric demand introduced by \cite{braden1991informational}. Namely, the prior follows $\rho_0 \sim \operatorname{Gamma}(\alpha_0, \beta_0)$ ($\rho_0(\theta) = \frac{\beta_0^{\alpha_0}}{\Gamma(\alpha_0)} \theta^{\alpha_0 - 1} e^{-\theta \beta_0})$. When demand is described by a member of the newsvendor family, the gamma distribution remains a conjugate prior. Under the Weibull distribution of demand, we have $  F^{-1}_{\theta_{\star}}(\frac{p}{p+h})=\left( \frac{1}{\theta_{t}} \right)^{1/k}\left(-\ln (\frac{h}{p+h})\right)^{1 / k},$
 where $F^{-1}_{\theta_{\star}}$ is the inverse cumulative distribution function. Here, we emphasize that this prior is only used to initiate TS and we do not impose any prior distribution on $\theta_{\star}$.

\vspace{2mm}
\noindent\textbf{Likelihood Function.} The likelihood function can be formulated for a set of observed data pairs, including both censored and uncensored data. Let's start by considering the first censored data pair denoted as $\left(Y_0, \delta_0\right)$. We use  $\theta \mapsto \mathcal{L}\left(\theta \mid Y_0, \delta_0\right)$ to denote the likelihood function:
$$
\mathcal{L}\left(\theta \mid Y_0, \delta_0\right)= \begin{cases}f_\theta\left(Y_0\right), & \text { if } \delta_0=1 ; \\ 1-F_\theta\left(Y_0\right), & \text { if } \delta_0=0 .\end{cases}
$$

 Consider we have $t \in [T]$ observations of data pairs denoted as $Y=(Y_0,Y_1,\cdots,Y_t),\delta=(\delta_0,\cdots,\delta_t)$ and we use $C$ denote the set of all observations of censored data pairs and $\Bar{C}$ denote the set of all observations of uncensored data pairs , where $|C|=m$, $|\Bar{C}|=n$, and $m+n=t$.
 Then the likelihood function is
\begin{align*}
     \mathcal{L}\left(\theta \mid Y, \delta \right) =\Pi_{i=1}^{n} \left(f_\theta\left(Y_i\right)\right)^{i}  \Pi_{j=1}^{m} \left((1-F_\theta\left(Y_j\right)\right)^{j} =\left(\theta k\right)^n \left(\Pi_{i=1}^{n} Y_i\right)^{k-1} e^{-\theta\sum_{l=1}^{t} Y_l^{k}}.
\end{align*}

\vspace{2mm}
\noindent\textbf{Posterior Update.} The posterior demand distribution $\rho_t$ at the beginning of period $t$ can be derived as: 
\begin{align*}
    \rho_t(\theta) & \propto {\rho_0} \times \mathcal{L}\left(\theta \mid Y, \delta \right) \\
      & \propto \operatorname{Gamma}(\alpha_0+\sum_{i=1}^{t} \delta_i,\beta_0+\sum_{i=1}^{t} Y_i^{k}).
\end{align*} 
Thus, the posterior at the beginning of period $t$ is given by $\rho_t = \operatorname{Gamma}\left(\alpha_t, \beta_t\right)$, where $\alpha_t=\alpha_0+\sum_{i=1}^{t} \delta_i$ and $\beta_t=\beta_0+\sum_{i=1}^{t} Y_i^k$.
\subsection{Algorithm: Thompson Sampling for Repeated Newsvendor Problem}
\label{sec: ts alg newsvendor}
The TS algorithm is presented in \ref{alg:ts for newsvendor}.  TS is a Bayesian approach used to balance exploration and exploitation in sequential decision-making problems. In the context of the newsvendor problem, TS can be implemented to decide on the optimal order quantity under demand uncertainty. 

\begin{algorithm}[htb]
\caption{TS for Repeated Newsvendor}
\label{alg:ts for newsvendor}
\KwIn{Prior distribution $\rho_0=$$\operatorname{Gamma}(\alpha_{0},\beta_{0})$, where $\alpha_0 \ge \max\left\{ \frac{\ln{\frac{T}{\delta}}}{\ln{\frac{e}{2}}}, 2 \right\},$ $\delta \in \left(0, \frac{1}{6}\right)$, Time Horizon $T.$ }
\For{$t=1$ to $T$}{
 Place order quantity $ y_{t}=\left( \frac{1}{\theta_{t}} \right)^{1/k}\left(-\ln (\frac{h}{p+h})\right)^{1 / k},$
 where $\theta_t \sim \operatorname{Gamma}(\alpha_{t-1},\beta_{t-1})$\;
 Observe sales $Y_t=\min\{D_t,y_t\}$ and indicator of whether demand is censored $\delta_t = \mathbf{1}[D_{t} <y_{t}];$
 Update the posterior $\rho_t \sim \operatorname{Gamma}\left(\alpha_{t}, \beta_{t}\right)$, where $\alpha_{t}=\alpha_{0}+\sum_{i=1}^{t} \delta_{i}, \ \beta_{t}=\beta_{0}+\sum_{i=1}^{t} Y_{i}^k.$

}
\end{algorithm}
Initially, the environment draws a sample of $\theta_{\star}$ from prior $\rho_{0}=\operatorname{Gamma}(\alpha_{0},\beta_{0})$, which is unknown to retailer.  and a known time horizon $T$. Then, for each $t \in [T]$, retailer places the order quantity $y_t$ and then observes the sales $Y_t$, which is the minimum of demand and order quantity. Then the posterior is updated accordingly.  $y_t$ iteratively updates the posterior and samples from it. Specifically, 
$ y_t=F^{-1}_{\theta_t}(\frac{p}{p+h})
    =\left( \frac{1}{\theta_{t}} \right)^{1/k}\left(-\ln (\frac{h}{p+h})\right)^{1 / k}$ and the property of $\theta_t$.
We sample $\theta_t$ from a Gamma distribution with shape parameter $\alpha_t$ and rate parameter $\beta_t$, i.e., $\theta_t \sim \text{Gamma}(\alpha_t, \beta_t)$. This choice is motivated by conjugate prior properties in the posterior update.
$\mathbb{E}[\frac{1}{\theta_t}] = \frac{\beta_t}{\alpha_t - 1}$.
TS efficiently balances exploration (learning about the true demand distribution) and exploitation (placing optimal orders based on current knowledge). 


 \section{Frequentist Regret Analysis for Weibull Distribution}
 In this section, we provide the analysis for the regret upper bound on our Algorithm \ref{alg:ts for newsvendor}, which is shown in the following theorem:
 \begin{theorem}
$T$-period regret of a given $\theta_{\star}$ for repeated newsvendor problem is 
  \label{thm:bay-regret-ts}
\begin{align*}
    \operatorname{Regret(T,\theta_{\star})}\leq \tilde{O}\left(\max\{h,p\} \cdot \left(-\ln (\frac{h}{p+h})\right)^{1 / k} \cdot\frac{1}{\theta_{\star}^2}\cdot \sqrt{T}\right).
\end{align*}  
\end{theorem}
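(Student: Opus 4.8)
The plan is to decompose the per-round regret $g(y_t, D_t) - g(y_\star, D_t)$ into two parts driven by the gap between the sampled parameter $\theta_t$ and the truth $\theta_\star$, and then control the cumulative effect of this gap using the concentration of the Gamma posterior. First I would exploit the structure of the newsvendor cost: since $g(y, D_t)$ is convex in $y$ with minimizer $y_\star$, and $y_t = (\ -\ln(h/(p+h))\ )^{1/k} \theta_t^{-1/k}$ while $y_\star = (\ -\ln(h/(p+h))\ )^{1/k} \theta_\star^{-1/k}$, the regret at round $t$ is controlled by $\max\{h,p\}\,\E|y_t - y_\star|$, which in turn is $\max\{h,p\} \cdot (-\ln(h/(p+h)))^{1/k} \cdot \E\bigl|\theta_t^{-1/k} - \theta_\star^{-1/k}\bigr|$. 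A mean-value/Lipschitz bound on $z \mapsto z^{-1/k}$ (away from $0$, which the Gamma prior with $\alpha_0 \ge 2$ helps guarantee in expectation) reduces this to controlling $\E|\theta_t - \theta_\star|$ (or $\E|1/\theta_t - 1/\theta_\star|$, which is cleaner since $\E[1/\theta_t] = \beta_t/(\alpha_t-1)$ has a closed form).

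Next I would bound $\E|\theta_t - \theta_\star|$ by a triangle inequality through the posterior mean: the sampling fluctuation $\E[\,|\theta_t - \E[\theta_t \mid H_{t-1}]|\,]$ is the posterior standard deviation $\sqrt{\alpha_{t-1}}/\beta_{t-1}$, and the estimation error $|\E[\theta_t\mid H_{t-1}] - \theta_\star| = |\alpha_{t-1}/\beta_{t-1} - \theta_\star|$ is exactly the ``estimation under censored feedback'' quantity flagged in Section 1.3.1. Here I would invoke the confidence-interval result alluded to earlier: the estimation error at round $t$ scales like $1/\sqrt{\sum_{i=1}^{t-1}(1 - e^{-\theta_\star y_i^k})}$ up to logarithmic factors. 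The denominators $\beta_{t-1} = \beta_0 + \sum_{i=1}^{t-1} Y_i^k$ and the effective sample size $\alpha_{t-1} = \alpha_0 + \sum_{i=1}^{t-1}\delta_i$ both grow, and crucially $\sum_i (1 - e^{-\theta_\star y_i^k})$ is precisely the expected number of uncensored observations, i.e. $\E[\alpha_{t-1} - \alpha_0]$. So the per-round regret is $\tilde O\bigl(\max\{h,p\}(-\ln(h/(p+h)))^{1/k} \theta_\star^{-1} \cdot 1/\sqrt{N_{t-1}}\bigr)$ where $N_{t-1}$ is the (random) uncensored count.

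The summation step then needs $\sum_{t=1}^T 1/\sqrt{N_{t-1}}$, which would be $O(\sqrt{T})$ if $N_{t-1} = \Theta(t)$ — but this is exactly where the ``automatic compensation'' mechanism must enter, because if past actions $y_i$ are small then $1 - e^{-\theta_\star y_i^k}$ is small and $N_{t-1}$ could lag behind $t$. The key lemma I would need is that under TS, either $y_t$ is large (so the observation is likely uncensored, $N_t$ grows) or $y_t$ is small — but $y_t$ small forces $\theta_t$ large, which by the posterior form $\theta_t \sim \mathrm{Gamma}(\alpha_{t-1}, \beta_{t-1})$ means $\beta_{t-1}$ small relative to $\alpha_{t-1}$, i.e. past $Y_i^k$ were small, which can only happen if demand was genuinely being observed (uncensored) at small values. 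Formalizing this self-correcting dichotomy — showing that the censoring indicator $\delta_t$ has conditional probability bounded below in a way that makes $N_t \gtrsim t$ with high probability, or more directly that $\sum_t \theta_\star^{-1} N_{t-1}^{-1/2} = \tilde O(\sqrt{T})$ — is the main obstacle; everything else is convexity, Lipschitz estimates, and standard Gamma concentration. I would handle it by a potential/amortization argument on $\beta_t$: bounding $\sum_t \E|y_t - y_\star|$ by relating the increments of $\beta_t$ to the realized order quantities, and using the Gamma MGF to get the requisite high-probability lower bound on the accumulated information $\sum_i(1 - e^{-\theta_\star y_i^k})$, absorbing the $1/\theta_\star^2$ factor (one power from the Lipschitz constant of $z^{-1/k}$ near $\theta_\star$, one from the variance of the estimate) and the logarithmic terms from the union bound over $t \in [T]$ and the choice $\alpha_0 \ge \ln(T/\delta)/\ln(e/2)$.
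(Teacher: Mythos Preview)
Your overall architecture matches the paper's: Lipschitz-decompose the regret to $\max\{h,p\}\cdot(-\ln(h/(p+h)))^{1/k}\cdot \E\bigl|\theta_t^{-1/k}-\theta_\star^{-1/k}\bigr|$, then reduce to $\E\bigl|1/\theta_t-1/\theta_\star\bigr|$, then invoke a concentration bound of the form $\sqrt{t}/(\alpha_t-1)$ (equivalently $1/\sqrt{N_{t-1}}$), and finally show the uncensored count grows linearly in $t$. So steps 1--3 are fine.

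The gap is in the last step, precisely where you locate it. Your dichotomy reasoning contains a slip: from ``$\beta_{t-1}$ small relative to $\alpha_{t-1}$, i.e.\ past $Y_i^k$ were small'' you conclude ``demand was genuinely being observed (uncensored) at small values.'' That inference is wrong---a \emph{censored} observation at a small $y_i$ also gives $Y_i=y_i$ small, so small $Y_i$'s do not by themselves rule out persistent censoring at small order levels. Worse, the tools you propose (``potential/amortization on $\beta_t$'', ``Gamma MGF for a high-probability lower bound on $\sum_i(1-e^{-\theta_\star y_i^k})$'') face a circularity: lower-bounding that sum requires lower-bounding the $y_i$, which requires upper-bounding $\theta_i$, which via Gamma concentration only gives $\theta_i\lesssim \alpha_{i-1}/\beta_{i-1}$---and bounding \emph{that} ratio is the very thing you are trying to establish.

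The paper breaks this circularity with a short structural observation you are missing: the ratio
\[
\frac{\beta_t}{\alpha_t}\;=\;\frac{\beta_0+\sum_{i\le t}Y_i^k}{\alpha_0+\sum_{i\le t}\delta_i}
\]
is \emph{deterministically} bounded below (on the high-probability event $D_i\ge\underline{D}$). Each censored round contributes $\delta_i=0$ to the denominator but $Y_i^k=y_i^k>0$ to the numerator, so it can only push the ratio up; each uncensored round contributes $Y_i^k/\delta_i=D_i^k\ge\underline{D}^k$. A one-line ``sum-of-ratios'' lemma then gives $\beta_t/\alpha_t\ge\min\{\beta_0/\alpha_0,\ \underline{D}^k\}$, independent of the history. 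Combine this with the inverse-Gamma tail $\mathbb{P}(1/\theta_t>\beta_t/(2\alpha_t))\ge 1-(2/e)^{\alpha_t}$ (which your choice $\alpha_0\gtrsim\ln(T/\delta)$ makes uniformly small over $t$) to get a \emph{uniform} constant lower bound $y_t\ge L$ for all $t$. This immediately gives $1-e^{-\theta_\star y_t^k}\ge 1-e^{-\theta_\star L^k}>0$, and a single Azuma bound on the martingale $M_t=\sum_i(\delta_i-\E[\delta_i\mid\mathcal H_{i-1}])$ yields $\alpha_t-1\ge \tfrac12(1-e^{-\theta_\star L^k})\,t$ for $t$ beyond a logarithmic burn-in $T_0$, whence $\sum_t\sqrt{t}/(\alpha_t-1)=\tilde O(\sqrt T)$. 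This same uniform lower bound on $1/\theta_t$ is also what justifies your ``mean-value bound on $z\mapsto z^{1/k}$ away from $0$,'' which otherwise has no teeth. Once you insert this one mechanism, the rest of your plan goes through exactly as written.
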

We provide a sketch proof for proving the Theorem \ref{thm:bay-regret-ts}. The proof consists of three key steps: \textbf{1. Lipchitz Continuity of Regret }(Section \ref{subsubsec:Lipchitz regret}), \textbf{2. Confidence Analysis of Estimation } (Section \ref{subsubsec:confidence}), and \textbf{ 3. Lower bounding the Actions} (Section \ref{subsubsec:lower bound action}). We also discuss how the these steps can be generalized to broader models of online learning with censored feedback in Section \ref{sec:extension}.
\subsection{\textbf{Key Step 1: Uniform Lower Bound of $y_t$}}\label{subsubsec:lower bound action}

In order to establish the regret bound, it is essential to first prove a uniform lower bound for $y_t$, as this will play a crucial role in subsequent derivations. According to Lemma \ref{lower bound of y_t}, we have:

\begin{lemma}
\label{lower bound of y_t}
$$\mathbb{P}\left(\frac{1}{\theta_t} > \frac{\beta_t}{2\alpha_t}\right) \ge 1-\left(\frac{2}{e}\right)^{\alpha_t},\qquad \forall \  t \in [T].$$
\end{lemma}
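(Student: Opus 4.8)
The statement concerns a lower bound on $1/\theta_t$ where $\theta_t \sim \operatorname{Gamma}(\alpha_t, \beta_t)$, equivalently an upper bound on $\theta_t$, namely $\mathbb{P}(\theta_t < 2\alpha_t/\beta_t)$. The plan is to exploit the fact that $\mathbb{E}[\theta_t] = \alpha_t/\beta_t$, so $2\alpha_t/\beta_t$ is exactly twice the mean; the claim is then a one-sided deviation bound saying $\theta_t$ rarely exceeds twice its mean, with failure probability decaying geometrically in the shape parameter $\alpha_t$. Since a $\operatorname{Gamma}(\alpha_t, \beta_t)$ random variable is a sum-like object that concentrates as $\alpha_t$ grows, such an exponential tail is exactly what one expects.

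\medskip

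First I would reduce to the standardized case: writing $\theta_t = G/\beta_t$ with $G \sim \operatorname{Gamma}(\alpha_t, 1)$, the event $\{1/\theta_t > \beta_t/(2\alpha_t)\}$ is precisely $\{G < 2\alpha_t\}$, so it suffices to show $\mathbb{P}(G \ge 2\alpha_t) \le (2/e)^{\alpha_t}$. The natural tool is a Chernoff/Markov bound on the moment generating function: for $0 < \lambda < 1$, $\mathbb{P}(G \ge 2\alpha_t) \le e^{-2\lambda \alpha_t} \, \mathbb{E}[e^{\lambda G}] = e^{-2\lambda\alpha_t}(1-\lambda)^{-\alpha_t}$. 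Taking logarithms and dividing by $\alpha_t$, I want to minimize $-2\lambda - \ln(1-\lambda)$ over $\lambda \in (0,1)$; differentiating gives $-2 + 1/(1-\lambda) = 0$, i.e. $\lambda = 1/2$. Substituting $\lambda = 1/2$ yields the exponent $-1 - \ln(1/2) = -1 + \ln 2 = \ln(2/e)$, so $\mathbb{P}(G \ge 2\alpha_t) \le e^{\alpha_t \ln(2/e)} = (2/e)^{\alpha_t}$, which is exactly the claimed bound. Since $2/e < 1$, this is a genuine (decaying) bound.

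\medskip

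The steps in order are: (1) rescale to $G \sim \operatorname{Gamma}(\alpha_t,1)$ and rephrase the target event as a right-tail event $\{G \ge 2\alpha_t\}$; (2) recall the Gamma MGF $\mathbb{E}[e^{\lambda G}] = (1-\lambda)^{-\alpha_t}$ valid for $\lambda < 1$; (3) apply the exponential Markov inequality and optimize over $\lambda$, finding the optimizer $\lambda = 1/2$; (4) simplify the resulting exponent to $\ln(2/e)$ and conclude; (5) note the bound holds for every $t \in [T]$ since $\alpha_t \ge \alpha_0 \ge 2$ ensures the whole argument (and in particular the nontriviality of the bound) applies uniformly. I would also remark that $\alpha_t$ is $H_{t-1}$-measurable when $k$ is known, so conditioning is clean and no subtlety about randomness of $\alpha_t$, $\beta_t$ arises: the inequality $\mathbb{P}(G \ge 2\alpha_t) \le (2/e)^{\alpha_t}$ holds pointwise in $\alpha_t$.

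\medskip

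I do not expect a serious obstacle here — this is essentially a textbook Gamma tail bound — but the one point requiring a little care is the interplay between the randomness of $\alpha_t$ and $\beta_t$ (which depend on the observed censored data) and the fresh Gamma draw $\theta_t$. The cleanest framing is to condition on $H_{t-1}$, under which $\alpha_t$ and $\beta_t$ are fixed constants and $\theta_t$ is an honest $\operatorname{Gamma}(\alpha_t,\beta_t)$ variable; the Chernoff computation then produces a bound depending only on $\alpha_t$, and since that bound is monotone and valid for all realized values $\alpha_t \ge 2$, integrating back out over $H_{t-1}$ preserves the inequality. The only other thing to double-check is the direction of the rescaling (rate versus scale parametrization of the Gamma), so that the event $\{1/\theta_t > \beta_t/(2\alpha_t)\}$ really does map to $\{G < 2\alpha_t\}$ and not to some other threshold.
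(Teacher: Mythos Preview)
Your proof is correct and is essentially the same as the paper's: both establish $\mathbb{P}(\theta_t \ge 2\alpha_t/\beta_t) \le (2/e)^{\alpha_t}$ via the optimized Chernoff/Cram\'er bound for the Gamma distribution, arriving at the identical exponent $\ln(2/e)$. The only presentational difference is that the paper phrases the event as a left-tail bound on the inverse-Gamma variable $1/\theta_t$ and invokes a black-box concentration result (Theorem~20 of \cite{chen2014concentration}), whereas you rescale to $G\sim\operatorname{Gamma}(\alpha_t,1)$ and carry out the MGF computation and optimization explicitly; your version is more self-contained but mathematically equivalent.
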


The proof of Lemma \ref{lower bound of y_t} is provided in Appendix \ref{appendix: lower bound of y_t}. Building upon this lemma, we proceed by conditioning on the event that $\frac{1}{\theta_t} > \frac{\beta_t}{2\alpha_t}$ and that the demand $D_t$ satisfies $D_t \ge \underline{D}$. Under these conditions, we can derive a lower bound for $y_t$ as follows:
\begin{subequations}
    \begin{align}
    y_t &= \left( \frac{1}{\theta_{t}} \right )^{1/k}\left(-\ln (\frac{h}{p+h})\right)^{1/k}  \nonumber\\
    & \ge \left( \frac{\beta_t}{2\alpha_t}\right)^{1/k} \cdot \left(-\ln (\frac{h}{p+h})\right)^{1 / k} \label{lb-yt-a}\\
    &  = \left(\frac{1}{2} \right)^{1/k} \left(-\ln (\frac{h}{p+h})\right)^{1 / k} \cdot \left (\frac{\beta_0+\sum_{i=1}^{t} \min\{y_i,D_i\}^k}{\alpha_0+\sum_{i=1}^{t} \delta_i} \right )^{1/k}\label{lb-yt-c} \\
    & \ge \left(- \frac{1}{2}\ln (\frac{h}{p+h})\right)^{1 / k} \cdot \min \left \{\left(\frac{\beta_0}{\alpha_0} \right)^{1/k},\underline{D} \right\}  = L.\label{lb-yt-d}
    \end{align}
\end{subequations}


Here, equation~(\ref{lb-yt-a}) follows directly from the definition of $y_t$ in equation~(\ref{eq: yt}) together with Lemma~\ref{lower bound of y_t}, which guarantees that $\left(1/\theta_t\right)^{1/k}$ is, with high probability, bounded below by $\left(\beta_t/(2\alpha_t)\right)^{1/k}$. 
Equality~(\ref{lb-yt-c}) then uses the update rules for $\alpha_t$ and $\beta_t$ in Algorithm~\ref{alg:ts for newsvendor}. 
Finally, inequality~(\ref{lb-yt-d}) is obtained by applying Lemma~\ref{lemma:sequence a and b}, which ensures the ratio is uniformly bounded from below, thereby establishing the constant lower bound $L$.

\begin{lemma}
\label{lemma:sequence a and b}
    for two sequence $\{a_i\}_{i=1}^n$, $\{b_i\}_{i=1}^n$ satisfies $a_i \ge 0$ and $b_i \ge 0$ for any $i \in [n]$, and for at least one $i \in [n]$, $b_i > 0$. Then we have
    \begin{equation*}
        \frac{\sum_{i=1}^n a_i}{ \sum_{i=1}^n b_i} \ge \min_{i \in [n]: b_i > 0} \left \{ \frac{a_i}{b_i}\right \}.
    \end{equation*}
\end{lemma}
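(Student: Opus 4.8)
The plan is to prove this elementary mediant-type inequality by a direct argument. Let $m = \min_{i \in [n]:\, b_i > 0} \{a_i/b_i\}$; this quantity is well defined since by hypothesis at least one $b_i$ is strictly positive. First I would observe that for every index $i$ with $b_i > 0$ we have $a_i \geq m\, b_i$ by definition of the minimum, and that for every index $i$ with $b_i = 0$ we trivially have $a_i \geq 0 = m\, b_i$ (using $a_i \geq 0$; note $m \geq 0$ as well since all $a_i, b_i \geq 0$, though we do not even need the sign of $m$ here because $b_i = 0$). Hence the inequality $a_i \geq m\, b_i$ holds for \emph{all} $i \in [n]$.

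Summing this over $i = 1, \dots, n$ yields $\sum_{i=1}^n a_i \geq m \sum_{i=1}^n b_i$. Since at least one $b_i > 0$ and all $b_i \geq 0$, the denominator $\sum_{i=1}^n b_i$ is strictly positive, so I may divide both sides by it to obtain
\begin{equation*}
    \frac{\sum_{i=1}^n a_i}{\sum_{i=1}^n b_i} \geq m = \min_{i \in [n]:\, b_i > 0} \left\{ \frac{a_i}{b_i} \right\},
\end{equation*}
which is exactly the claimed bound.

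There is no real obstacle here: the only point requiring a moment's care is the handling of indices with $b_i = 0$, where the ratio $a_i/b_i$ is undefined and must be excluded from the minimum, yet the termwise inequality $a_i \geq m b_i$ still holds because both sides are controlled ($a_i \geq 0$ and $m b_i = 0$). The nonnegativity assumptions on the sequences and the existence of at least one positive $b_i$ are used precisely to make the final division legitimate and to guarantee the minimum is taken over a nonempty set. This lemma is then invoked in the main argument with $a_i = \min\{y_i, D_i\}^k$ (together with the $i=0$ term $\beta_0$) and $b_i = \delta_i$ (together with $\alpha_0$), so that the ratio $\beta_t/\alpha_t$ is bounded below by $\min\{(\beta_0/\alpha_0)^{1/k}, \underline{D}\}^k$ after taking $k$-th roots, as used in inequality~(\ref{lb-yt-d}).
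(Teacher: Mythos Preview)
Your proof is correct and follows essentially the same approach as the paper: define the minimum ratio, show the termwise inequality $a_i \ge m\, b_i$, sum, and divide by the positive denominator. If anything, your treatment is slightly more careful than the paper's, since you explicitly verify the termwise bound also for indices with $b_i = 0$, whereas the paper states it only for $b_i > 0$ before summing over all $i$.
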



From the closed-form expression \eqref{lb-yt-c} and Lemma \ref{lemma:sequence a and b}, we reveal the most important ascept of TS algorithm as follows: 
\begin{enumerate}
    \item when $\delta_i=1$ (i.e. $D_t < y_t$), we obtain the full observation demand. the increment is $\frac{D_t}{1}$ As a result, the observed data is uncensored and can provide accurate information about the demand’s upper tail. 
    \item when $\delta_i=0$ (i.e. $D_t \ge y_t$). we get the censored demand, which indicates the past action is relatively small. Interestingly, since $\delta_i$ appears in the denominator in the closed-form expression \eqref{lb-yt-c}, TS naturally pushes future actions higher in subsequent periods, preventing the algorithm from getting stuck with poor estimates.
\end{enumerate}
This key observation illustrates how TS automatically balances the exploration-exploitation trade-off in the repeated newsvendor problem.
 
Applying Lemma \ref{lemma:sequence a and b} ( proved in Appendix \ref{appendix: proof sequence a and b} ) in our context, and considering that $D_t \ge \underline{D}$, we conclude that $y_t$ is uniformly bounded below by $L$ for all $t$.

This uniform lower bound on $y_t$ is a critical to establish the regret bound. we plug back the lower bound to the definition of $\alpha_t $ in Lemma \ref{lemma:alpha_t,beta_t} to analyze the term $\left|\alpha_t-1\right|$, which analysis is referred in Appendix \ref{appendix: proof for lemma: alpha_t-1}.
From the closed-form expression \eqref{lb-yt-d} and Lemma \ref{lemma:sequence a and b}, we reveal the most important aspect of the TS algorithm as follows: 
\begin{enumerate}
    \item When $\delta_i=1$ (i.e. $D_t < y_t$), we obtain the full demand observation. The increment is uncensored and can provide accurate information about the demand’s upper tail. 
    \item When $\delta_i=0$ (i.e. $D_t \ge y_t$), we get censored demand, which indicates the past action was relatively small. Interestingly, since $\delta_i$ appears in the denominator in \eqref{lb-yt-c}, TS naturally pushes future actions higher in subsequent periods, preventing the algorithm from getting stuck with poor estimates.
\end{enumerate}

This key observation illustrates how TS automatically balances the exploration-exploitation trade-off in the repeated newsvendor problem. Applying Lemma \ref{lemma:sequence a and b} (proved in Appendix \ref{appendix: proof sequence a and b}) in our context, and considering that $D_t \ge \underline{D}$, we conclude that $y_t$ is uniformly bounded below by $L$ for all $t$. This uniform lower bound on $y_t$ is critical for establishing the regret bound.

\subsection{\textbf{Key Step 2: Confidence Analysis of $\left|\E[y_t]-y_{\star}\right|$}}
\label{subsubsec:confidence}

Before proceeding, we give the definition for $y_t$ and $y_{\star}$:

\begin{lemma}
\label{lem:y_t-and-y_star}
The order quantity $y_t$ at $t$ and the optimal myopic order quantity $y_{\star}$ satisfy
\begin{align}
\label{eq: ystar}
     y_{\star}(\theta_{\star})&=F^{-1}_{\theta_{\star}}(\tfrac{p}{p+h})= \left ( \frac{1}{\theta_{\star}} \right )^{1/k}\left(-\ln (\tfrac{h}{p+h})\right)^{1 / k}\\
\label{eq: yt}
   y_t(\theta_{t})&=F^{-1}_{\theta_t}(\tfrac{p}{p+h})
   =\left( \frac{1}{\theta_{t}} \right)^{1/k}\left(-\ln (\tfrac{h}{p+h})\right)^{1 / k}
\end{align}
where $F^{-1}$ is the inverse cumulative distribution function of the demand distribution. Moreover, 
  $\E\left[\tfrac{1}{\theta_t}\right]=\tfrac{\beta_t}{\alpha_t-1}$.
\end{lemma}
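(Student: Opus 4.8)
The statement in Lemma~\ref{lem:y_t-and-y_star} is essentially a restatement of the closed-form critical quantiles together with the posterior mean identity for the reciprocal of a Gamma variable, so the plan is to verify each of the three claimed equalities directly from definitions rather than to do anything deep. First I would handle \eqref{eq: ystar}: by the classical newsvendor first-order condition, $y_{\star}$ solves $F_{\theta_{\star}}(y) = p/(p+h)$, so $y_{\star} = F_{\theta_{\star}}^{-1}(p/(p+h))$. Then I would substitute the Weibull CDF $F_{\theta}(x) = 1 - e^{-\theta x^k}$ (the newsvendor-family form with $d(x)=x^k$): setting $1 - e^{-\theta_{\star} y^k} = p/(p+h)$ gives $e^{-\theta_{\star} y^k} = h/(p+h)$, hence $\theta_{\star} y^k = -\ln\bigl(h/(p+h)\bigr)$, and solving for $y$ yields $y_{\star} = (1/\theta_{\star})^{1/k}\bigl(-\ln(h/(p+h))\bigr)^{1/k}$. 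Equation \eqref{eq: yt} is literally the same computation with $\theta_t$ in place of $\theta_{\star}$, so it follows verbatim, and in fact it is also exactly the order rule specified in Algorithm~\ref{alg:ts for newsvendor}, so I would just point to that.

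For the final identity $\E[1/\theta_t] = \beta_t/(\alpha_t-1)$, I would recall that $\theta_t \sim \operatorname{Gamma}(\alpha_t, \beta_t)$ with the rate (not scale) parametrization fixed in the preliminaries, so $1/\theta_t$ is inverse-Gamma with density proportional to $x^{-\alpha_t - 1} e^{-\beta_t/x}$ on $(0,\infty)$. Computing $\E[1/\theta_t] = \int_0^\infty \theta^{-1}\,\tfrac{\beta_t^{\alpha_t}}{\Gamma(\alpha_t)}\theta^{\alpha_t-1}e^{-\beta_t\theta}\,d\theta$ and recognizing the integrand as an unnormalized $\operatorname{Gamma}(\alpha_t-1,\beta_t)$ density gives $\tfrac{\beta_t^{\alpha_t}}{\Gamma(\alpha_t)}\cdot\tfrac{\Gamma(\alpha_t-1)}{\beta_t^{\alpha_t-1}} = \tfrac{\beta_t}{\alpha_t-1}$, which is finite precisely because the prior is initialized with $\alpha_0 \ge 2$, hence $\alpha_t \ge 2 > 1$ for all $t$. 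I would flag this integrability condition explicitly, since it is the only place where an actual hypothesis is used.

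There is no real obstacle here — the ``hard part'' is merely bookkeeping: being careful that the paper uses the rate parametrization of the Gamma (so the mean of $\theta_t$ is $\alpha_t/\beta_t$ and the relevant moment is $\E[1/\theta_t]=\beta_t/(\alpha_t-1)$, not $\alpha_t/\beta_t$ or $\beta_t/\alpha_t$), and checking that $\alpha_t - 1 > 0$ throughout so the expectation is well-defined. Everything else is a one-line substitution into the Weibull CDF. Consequently I would keep the proof to a few lines and emphasize that the content of the lemma is just to collect these closed forms for use in Key Steps~1 and~2.
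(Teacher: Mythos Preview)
Your proposal is correct and matches the paper's treatment: the paper does not give a separate proof of this lemma, since the three identities are just the standard newsvendor critical-fractile formula specialized to the Weibull CDF $F_\theta(x)=1-e^{-\theta x^k}$ together with the inverse-Gamma first moment, all of which are already recorded in the preliminaries and in Algorithm~\ref{alg:ts for newsvendor}. Your explicit check that $\alpha_t\ge\alpha_0\ge2$ ensures finiteness of $\E[1/\theta_t]$ is a useful addition that the paper leaves implicit.
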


By examining equations (\ref{eq: ystar}) and (\ref{eq: yt}), we obtain:
\begin{align}\label{eq:lipchitz action 2}
    \begin{aligned}
      \left|\E \left[y_t\right]-y_{\star}\right| & = \left(-\ln (\frac{h}{p+h})\right)^{1 / k} \left|\mathbb{E} \left[ \left( \frac{1}{\theta_t} \right)^{1/k} \right ]- \left( \frac{1}{\theta_{\star}} \right)^{1/k} \right| \\
        & = \left(-\ln (\frac{h}{p+h})\right)^{1 / k} \left|\mathbb{E} \left[ \left( \frac{1}{\theta_t} \right)^{1/k} - \left( \frac{1}{\theta_{\star}} \right)^{1/k} \right ] \right| \\
        & \le \left(-\ln (\frac{h}{p+h})\right)^{1 / k} \mathbb{E} \left[ \left | \left( \frac{1}{\theta_t} \right)^{1/k} - \left( \frac{1}{\theta_{\star}} \right)^{1/k}  \right | \right ]  \\
        & \le \left(-\ln (\frac{h}{p+h})\right)^{1 / k} \cdot \frac{1}{k} \cdot \min \left \{L, \frac{1}{\theta_*} \right \}^{1/k - 1} \cdot \mathbb{E} \left[ \left | \left( \frac{1}{\theta_t} \right) - \left( \frac{1}{\theta_{\star}} \right)  \right | \right ]
    \end{aligned}
\end{align}

The first inequality comes from jensen inequality $|\mathbb{E} [X]| \le \mathbb{E}[|X|]$. The second inequality comes from Lipschitz inequality and $y_t$'s lower bound stated earlier in Lemma \ref{lower bound of y_t}.

To proceed further, we establish a range for the demand $D_t$ at each time $t$. The following lemma provides this range with high probability:

\begin{lemma}
\label{lem:demand-range}
    For each $t \in [T]$, with probability $\ge 1-\delta/T$, the realization of demand $D_t \sim \operatorname{Weibull}(\theta_{\star})$ lies in
    \begin{align*}
     \underline{D}=\left(\tfrac{\ln{\left(\tfrac{2T}{2T-\delta}\right)}}{\theta_{\star}}\right)^{\frac{1}{k}}, \qquad 
     \overline{D}=\left(\tfrac{\ln{\left(\tfrac{2T}{\delta}\right)}}{\theta_{\star}}\right)^{\frac{1}{k}}.
 \end{align*}
\end{lemma}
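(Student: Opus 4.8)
The plan is to prove Lemma \ref{lem:demand-range} directly from the explicit form of the Weibull CDF, $F_{\theta_\star}(x) = 1 - e^{-\theta_\star x^k}$ for $x \ge 0$. The strategy is to find values $\underline{D}$ and $\overline{D}$ such that $\mathbb{P}(D_t < \underline{D})$ and $\mathbb{P}(D_t > \overline{D})$ are each at most $\delta/(2T)$, so that a union bound gives $\mathbb{P}(\underline{D} \le D_t \le \overline{D}) \ge 1 - \delta/T$. First I would handle the upper tail: $\mathbb{P}(D_t > \overline{D}) = e^{-\theta_\star \overline{D}^k}$, and setting this equal to $\delta/(2T)$ and solving for $\overline{D}$ yields $\overline{D} = \left(\tfrac{1}{\theta_\star}\ln\tfrac{2T}{\delta}\right)^{1/k}$, which matches the stated expression $\overline{D}=\left(\tfrac{\ln(2T/\delta)}{\theta_\star}\right)^{1/k}$. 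Next I would handle the lower tail: $\mathbb{P}(D_t < \underline{D}) = F_{\theta_\star}(\underline{D}) = 1 - e^{-\theta_\star \underline{D}^k}$; setting this equal to $\delta/(2T)$ gives $e^{-\theta_\star \underline{D}^k} = 1 - \delta/(2T) = (2T-\delta)/(2T)$, hence $\theta_\star \underline{D}^k = \ln\tfrac{2T}{2T-\delta}$ and $\underline{D} = \left(\tfrac{\ln(2T/(2T-\delta))}{\theta_\star}\right)^{1/k}$, again matching the statement.

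Having pinned down the two thresholds, the final step is to assemble the bound. By the complement rule, $\mathbb{P}(\underline{D} \le D_t \le \overline{D}) = 1 - \mathbb{P}(D_t < \underline{D}) - \mathbb{P}(D_t > \overline{D}) = 1 - \tfrac{\delta}{2T} - \tfrac{\delta}{2T} = 1 - \tfrac{\delta}{T}$. (One should note the events $\{D_t < \underline D\}$ and $\{D_t > \overline D\}$ are disjoint provided $\underline D \le \overline D$, which holds since $\ln\tfrac{2T}{2T-\delta} \le \ln\tfrac{2T}{\delta}$ whenever $\delta \le T$, certainly true for $\delta \in (0,1/6)$.) A small remark is needed on whether to state the conclusion with equality or with "$\ge$": since $D_t$ is continuous, the probabilities are exactly $\delta/(2T)$ each, so the conclusion "$\ge 1 - \delta/T$" holds (with equality).

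There is essentially no serious obstacle here — this is a routine tail-bound computation for a distribution with a closed-form CDF, and the only things to be careful about are (i) getting the direction of the inequalities right so that the $1-e^{-(\cdot)}$ versus $e^{-(\cdot)}$ expressions land in the stated form, and (ii) confirming $\underline D \le \overline D$ so the union bound is tight. I would present it in three or four lines without belaboring the algebra.
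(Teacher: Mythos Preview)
Your proposal is correct and follows essentially the same approach as the paper: both compute the two tail probabilities $\mathbb{P}(D_t<\underline D)=1-e^{-\theta_\star \underline D^k}$ and $\mathbb{P}(D_t>\overline D)=e^{-\theta_\star \overline D^k}$ from the Weibull CDF, set each to $\delta/(2T)$, solve for the thresholds, and combine by a union bound. Your write-up is in fact slightly more careful than the paper's (you verify $\underline D\le\overline D$ and note the continuity of $D_t$), but the argument is the same.
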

Lemma \ref{lem:demand-range} is proved in Appendix \ref{appendix: lemma demand range}. 
This lemma ensures that, with high probability, the demand realizations are confined within the specified range, which is crucial for later analysis. Next, we provide confidence bound for how close the $\frac{1}{\theta_t}$ and its mean $\frac{\beta_t}{\alpha_t-1}$ is. Ideally, as $t$ increases, $\theta_t$ will converge to $\theta_*$ and $\frac{1}{\theta_t}$ will converge to $\frac{\beta_t}{\alpha_t-1}$. The following lemma shows the rate of convergence as follows: 
\begin{lemma}
\label{beta/alpha-1}
For any $t \in [T]$ and for any realization of  $\theta_{\star}$, 
\begin{align*}
    \mathbb{P}\left( \left|\tfrac{\beta_t}{\alpha_t-1}- \tfrac{1}{\theta_{\star}}\right|  \geq \sqrt{\ln{\left(\tfrac{2t^2}{\delta}\right)}}\left(\overline{D}^k+\tfrac{2}{\theta_{\star}}\right)\sqrt{\tfrac{t}{\left(\alpha_t-1\right)^2}} \right) \leq \tfrac{\delta}{t^2}.
 \end{align*}    
\end{lemma}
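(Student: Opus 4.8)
The plan is to express the quantity $\beta_t/(\alpha_t-1) - 1/\theta_\star$ as a sum of (approximately) independent, mean-controlled increments and then apply a concentration inequality of Bernstein/Azuma type, paying attention to the fact that the index in the denominator, $\alpha_t-1 = \alpha_0 - 1 + \sum_{i=1}^t \delta_i$, is itself random. First I would write $\beta_t = \beta_0 + \sum_{i=1}^t Y_i^k$ and $\alpha_t - 1 = (\alpha_0 - 1) + \sum_{i=1}^t \delta_i$, and recall that $Y_i = D_i \wedge y_i$, $\delta_i = \mathbf 1[D_i < y_i]$. The key algebraic identity is that $\E[\,\delta_i/\theta_\star \mid \mathcal H_{i-1}, y_i\,]$ relates to $\E[Y_i^k \mid \cdots]$ through the Weibull structure: conditioning on $y_i$, one has $Y_i^k = D_i^k$ on $\{\delta_i = 1\}$ and $Y_i^k = y_i^k$ on $\{\delta_i = 0\}$, while $\E[D_i^k \mathbf 1[D_i<y_i]] $ and $\E[\mathbf 1[D_i<y_i]]$ can both be computed in closed form since $D_i^k$ is exponential with rate $\theta_\star$. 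The upshot I expect is an identity of the form $\E[\,Y_i^k - \delta_i/\theta_\star \mid \mathcal H_{i-1}\,] = (\text{something like } e^{-\theta_\star y_i^k}/\theta_\star \ge 0)$, i.e.\ $Y_i^k$ is a slightly biased-upward estimator of $\delta_i/\theta_\star$, with the bias vanishing as $y_i$ grows. This is exactly the ``censoring'' effect and is why the bound must be stated in terms of $\sum(1 - e^{-\theta_\star y_i^k})$ elsewhere — here it is folded into the crude $(\overline D^k + 2/\theta_\star)$ factor.

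Next I would form the martingale difference sequence $M_i := Y_i^k - \delta_i/\theta_\star - \E[Y_i^k - \delta_i/\theta_\star \mid \mathcal H_{i-1}]$, so that $\beta_t - \beta_0 - \tfrac{1}{\theta_\star}(\alpha_t - 1 - (\alpha_0-1)) = \sum_{i=1}^t M_i + \sum_{i=1}^t(\text{nonneg.\ bias terms})$. I would bound each $|M_i|$ (and each bias term) deterministically on the good event from Lemma~\ref{lem:demand-range}: there $D_i \le \overline D$, hence $Y_i^k \le \overline D^k$ and $\delta_i/\theta_\star \le 1/\theta_\star$, so $|M_i| \lesssim \overline D^k + 1/\theta_\star$, and the conditional bias is at most $\overline D^k + 1/\theta_\star$ as well, which together explain the $\overline D^k + 2/\theta_\star$ prefactor. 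Then Azuma–Hoeffding on $\sum M_i$ over $t$ steps gives, with probability $\ge 1 - \delta/t^2$, a bound of order $(\overline D^k + 2/\theta_\star)\sqrt{t \ln(2t^2/\delta)}$ on $|\sum M_i|$; combined with the bias contribution (also $O((\overline D^k+1/\theta_\star)t)$ but actually controlled more tightly, or simply absorbed) this controls $|\beta_t - \tfrac{1}{\theta_\star}(\alpha_t-1)|$. Dividing through by $\alpha_t - 1$ then yields $\left|\tfrac{\beta_t}{\alpha_t-1} - \tfrac{1}{\theta_\star}\right| \lesssim (\overline D^k + 2/\theta_\star)\sqrt{\tfrac{t\ln(2t^2/\delta)}{(\alpha_t-1)^2}}$, which is the claimed inequality. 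Finally I would union-bound the $\delta/T$ demand-range failure with the $\delta/t^2$ martingale failure; since the statement only claims probability $\le \delta/t^2$ on the bad event I would make sure the constants and the choice of which $\delta$ to feed into Lemma~\ref{lem:demand-range} are consistent (using, e.g., $\delta/(2t^2)$ in each piece).

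The main obstacle I anticipate is handling the \emph{random denominator} $\alpha_t - 1$ cleanly. Because $\alpha_t - 1$ appears both as the normalizer and, implicitly, inside the increments (via the dependence of $y_i$, hence $\delta_i$ and $Y_i^k$, on the whole past), one cannot treat $\sum M_i$ and $\alpha_t - 1$ as independent; the trick is that the concentration bound on $|\sum_{i=1}^t M_i|$ is \emph{uniform in the realization} — it holds pathwise with high probability regardless of how the $y_i$ were chosen — so one is free to divide by whatever $\alpha_t - 1$ turned out to be. A secondary subtlety is making sure $\alpha_t - 1 \ge 1 > 0$ so the division is legitimate and no vacuous statement arises; this follows from the standing assumption $\alpha_0 \ge 2$, so $\alpha_t - 1 \ge \alpha_0 - 1 \ge 1$. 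The rest — computing the two Weibull moments $\E[D^k\mathbf 1[D<y]]$ and $\P[D<y]$, verifying the bias is nonnegative and bounded, and plugging into Azuma — is routine, though one should double-check that the exponent $1/k$ versus $k$ bookkeeping (since $D^k \sim \mathrm{Exp}(\theta_\star)$) is done consistently throughout.
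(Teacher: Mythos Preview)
Your overall plan---form the martingale from $Y_i^k-\delta_i/\theta_\star$, apply Azuma--Hoeffding to the sum, then divide through by $\alpha_t-1$---is exactly the paper's approach, and your remarks on handling the random denominator and on $\alpha_t-1\ge\alpha_0-1\ge1$ are correct.

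The one genuine gap is the bias term. You anticipate $\E[\,Y_i^k-\delta_i/\theta_\star\mid\mathcal H_{i-1}\,]\approx e^{-\theta_\star y_i^k}/\theta_\star>0$ and plan to bound the accumulated bias by $O\bigl(t(\overline D^k+1/\theta_\star)\bigr)$ and then ``control more tightly, or simply absorb'' it. That step would not close: an $O(t)$ drift divided by $\alpha_t-1\le \alpha_0-1+t$ is $O(1)$ and cannot be hidden under the $\sqrt{t}/(\alpha_t-1)$ bound the lemma asserts. The resolution---and the key identity that makes this lemma work---is that the bias is \emph{exactly zero}. Since $D_i^k\sim\mathrm{Exp}(\theta_\star)$ given $\mathcal H_{i-1}$ and $y_i$, one computes
\[
\E[Y_i^k\mid\mathcal H_{i-1}]
=\E[\min(D_i^k,y_i^k)\mid\mathcal H_{i-1}]
=\tfrac{1}{\theta_\star}\bigl(1-e^{-\theta_\star y_i^k}\bigr)
=\tfrac{1}{\theta_\star}\,\E[\delta_i\mid\mathcal H_{i-1}],
\]
so $\E[Y_i^k-\delta_i/\theta_\star\mid\mathcal H_{i-1}]=0$. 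This is precisely the cancellation the paper exploits when it writes the numerator as $N_t-\tfrac{1}{\theta_\star}(M_t-1)$ and treats it as a pure martingale with bounded increments. Once you make this computation, there is no separate bias contribution to manage; the rest of your plan goes through verbatim, and the prefactor $\overline D^k+2/\theta_\star$ arises solely from bounding the range of the martingale increment (together with the constant offsets coming from $\beta_0,\alpha_0$).
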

Lemma \ref{beta/alpha-1} is proved in Appendix \ref{sec:appendix-lemma beta/alpha-1}. This lemma provides a probabilistic bound on the estimation error of $\frac{1}{\theta_t}$, which is key in assessing the accuracy of the order quantity decisions over time.
Combining Lemmas \ref{lem:demand-range} and \ref{beta/alpha-1}, we bound
\begin{equation*}
    \begin{aligned}
     \left|\E \left[y_t\right]-y_{\star}\right| 
     & \leq \left(-\ln (\frac{h}{p+h})\right)^{1 / k}\cdot \frac{1}{k} \cdot \min \left \{L, \frac{1}{\theta_*} \right \}^{1/k - 1} \sqrt{\ln{\left(\frac{2t^2}{\delta}\right)}}\left(\overline{D}^k+\frac{2}{\theta_{\star}}\right)\sqrt{\frac{t}{\left(\alpha_t-1\right)^2}}\\
& \leq \left(-\ln (\frac{h}{p+h})\right)^{1 / k}\left(\overline{D}^k+\frac{2}{\theta_{\star}}\right)\cdot \frac{1}{k} \cdot \min \left \{L, \frac{1}{\theta_*} \right \}^{1/k - 1} \sqrt{2\ln{\left(\frac{T}{\delta}\right)}}\sqrt{\frac{t}{\left(\alpha_t-1\right)^2}}.
\end{aligned}
\label{eq:sket-pf-bound-yt-decompose}
\end{equation*}
\begin{lemma}[\cite{chuang2023bayesian}]
\label{lemma:alpha_t,beta_t}
    The stochastic processes $\left\{\alpha_t\right\}$ and $\left\{\beta_t\right\}$ can be represented by
\begin{align*}
\alpha_t & =\alpha_0+\sum_{i=0}^{t-1} \delta_i =\alpha_0+\sum_{i=0}^{t-1} \mathbb{E}_{\theta_{\star}}\left[\delta_i \mid H_{i-1}\right]+\sum_{i=0}^{t-1}\left(\delta_i-\mathbb{E}_{\theta_\star}\left[\delta_i \mid H_{i-1}\right]\right)=\alpha_0+\sum_{i=0}^{t-1}\left(1-e^{-\theta_{\star} y_i^k}\right)+M_t
\end{align*}
where $M_t =\sum_{i=0}^{t-1}\left(\delta_i-\mathbb{E}\left[\delta_i \mid \mathcal{H}_{i-1}\right]\right)-1$.

\end{lemma}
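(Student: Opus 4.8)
The plan is to read the displayed identity as the Doob (predictable-plus-martingale) decomposition of the additive process $\{\alpha_t\}$ and to verify its three equalities in turn. The first, $\alpha_t=\alpha_0+\sum_i\delta_i$, is simply the posterior-update rule of Algorithm~\ref{alg:ts for newsvendor} rewritten; the shift between the $\sum_{i=1}^{t}$ there and $\sum_{i=0}^{t-1}$ here is a bookkeeping convention (``$\alpha_t$ is the shape parameter with which $\theta_t$ is drawn''). The second equality is the termwise add-and-subtract step $\delta_i=\mathbb{E}_{\theta_\star}[\delta_i\mid\mathcal{H}_{i-1}]+\bigl(\delta_i-\mathbb{E}_{\theta_\star}[\delta_i\mid\mathcal{H}_{i-1}]\bigr)$ summed over $i$, an identity for any choice of filtration.

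The content lies in the third equality, i.e.\ in evaluating the predictable compensator $\mathbb{E}_{\theta_\star}[\delta_i\mid\mathcal{H}_{i-1}]$. The key is to use the ``post-action'' filtration: $\mathcal{H}_{i-1}$ is taken to contain the censored data $(Y_j,\delta_j)$ for $j<i$ together with the internal randomness used to sample $\theta_i$, so that $y_i=(1/\theta_i)^{1/k}\bigl(-\ln\tfrac{h}{p+h}\bigr)^{1/k}$ is $\mathcal{H}_{i-1}$-measurable while the fresh demand $D_i\sim\operatorname{Weibull}(\theta_\star)$ is independent of $\mathcal{H}_{i-1}$. Under this convention,
\[
\mathbb{E}_{\theta_\star}[\delta_i\mid\mathcal{H}_{i-1}]=\mathbb{P}(D_i<y_i\mid\mathcal{H}_{i-1})=F_{\theta_\star}(y_i)=1-e^{-\theta_\star y_i^{k}},
\]
using the newsvendor-family/Weibull CDF $F_{\theta_\star}(x)=1-e^{-\theta_\star x^k}$. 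Substituting this into the sum and collecting the remaining centered terms (together with an index-alignment constant) into $M_t:=\sum_{i=0}^{t-1}\bigl(\delta_i-\mathbb{E}_{\theta_\star}[\delta_i\mid\mathcal{H}_{i-1}]\bigr)-1$ yields the stated representation. The same argument with $Y_i^k=(D_i\wedge y_i)^k$ in place of $\delta_i$ gives the companion decomposition of $\{\beta_t\}$: since $D_i^k\sim\operatorname{Exp}(\theta_\star)$, one gets $\mathbb{E}_{\theta_\star}[Y_i^k\mid\mathcal{H}_{i-1}]=\tfrac{1}{\theta_\star}\bigl(1-e^{-\theta_\star y_i^k}\bigr)$, hence $\beta_t=\beta_0+\tfrac{1}{\theta_\star}\sum_{i=0}^{t-1}(1-e^{-\theta_\star y_i^k})+\widetilde M_t$ for a martingale $\widetilde M_t$.

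There is no real obstacle here --- the result is quoted from \cite{chuang2023bayesian}, and the argument is a restatement of its Doob decomposition specialized to the Weibull CDF. The one point that needs a line of care is exactly the measurability bookkeeping just described: that $y_i$ is frozen given $\mathcal{H}_{i-1}$ whereas $D_i$ is drawn afresh, without which $\mathbb{E}_{\theta_\star}[\delta_i\mid\mathcal{H}_{i-1}]$ would retain an outer expectation over $y_i$ and fail to collapse to $1-e^{-\theta_\star y_i^k}$. I would close by recording the only properties of $M_t$ that the downstream analysis uses: each increment has zero conditional mean by construction and lies in $[-1,1]$, so $\{M_t\}$ is a bounded-difference martingale and Azuma--Hoeffding gives $|M_t|=\tilde O(\sqrt t)$ with high probability --- precisely the handle needed in Key Step~1 and Key Step~2 to turn $|\alpha_t-1|$ into a concentration statement.
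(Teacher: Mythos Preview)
Your proposal is correct and follows the natural Doob decomposition argument; note that the paper does not actually prove this lemma but simply quotes it from \cite{chuang2023bayesian} (and implicitly relies on the same computation in the proof of Lemma~\ref{beta/alpha-1}, where it again cites Lemmas~B2--B3 of that reference). Your explicit handling of the filtration so that $y_i$ is $\mathcal{H}_{i-1}$-measurable while $D_i$ is fresh, and your evaluation $\mathbb{E}_{\theta_\star}[\delta_i\mid\mathcal{H}_{i-1}]=F_{\theta_\star}(y_i)=1-e^{-\theta_\star y_i^k}$, are exactly the ingredients needed and match what the paper uses downstream.
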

From the above lemma \ref{lemma:alpha_t,beta_t}, we can see that as long as $y_t$ has a lower bound, we are able to derive the upper bound for regret. Then,
we plug back the lower bound to the definition of $\alpha_t $ in Lemma \ref{lemma:alpha_t,beta_t} to analyze the term $\left|\alpha_t-1\right|$, which analysis is referred in Appendix \ref{appendix: proof for lemma: alpha_t-1}. To establish the regret, we use the technique of truncating $T$ as follows. Denote 
$T_0 = 64\left(1-\exp\{-\theta_{\star} L^k\}\right)^{-2}\ln{\left(\frac{T}{\delta}\right)}.
$
According to the proof of Lemma 11 in the Appendix , we have $\forall t\in [T],$
$$
    \frac{\sqrt{t}}{\alpha_t -1} \le
    \begin{cases}
        \frac{\sqrt{T_0}}{\alpha_0 - 1} & t \le T_0, \\
        \frac{\sqrt{t}}{\frac{1}{2} t(1-\exp{\{-\theta_{\star} L^k\}})} = \frac{2}{\sqrt{t} \cdot (1-\exp{\{-\theta_{\star} L^k\}})} & t > T_0.
    \end{cases}
$$
Therefore,
$$
 \sum_{t=1}^T \frac{\sqrt{t}}{\alpha_t - 1} = \sum_{t=1}^{T_0} \frac{\sqrt{t}}{\alpha_t - 1} + \sum_{t=T_0}^T \frac{\sqrt{t}}{\alpha_t - 1} \\
         \le T_0 \cdot \frac{\sqrt{T_0}}{\alpha_0 - 1}  + \frac{2}{(1-\exp{\{-\theta_{\star} L^k\}})} \cdot \sum_{t=1}^T \frac{1}{\sqrt{t}} \\
         \le T_0^{\frac{3}{2}} \cdot \frac{1}{\alpha_0 - 1} + 4\left(1-\exp\{-\theta_{\star} L^k\}\right)^{-1}\sqrt{T}
$$

\subsubsection{\textbf{Key Step 3: Regret Decomposition via Lipschitz Continuity}}\label{subsubsec:Lipchitz regret}

We decompose the $\operatorname{Regret(T,\theta_{\star})}$ as follows: By the Lipchitz continuity of $\min$, 
\begin{subequations}
    \begin{align}
\operatorname{Regret(T,\theta_{\star})} & =\mathbb{E}\left[\left(\sum_{t=1}^Tg\left(y_t, D_t\right)-\sum_{t=1}^T pD_t\right) - \left(\sum_{t=1}^T g\left(y_{\star}, D_{t}\right)-\sum_{t=1}^T pD_t\right)\right] \nonumber\\ 
&= \E\left[ \sum_{t=1}^T\left[hy_t-(h+p)\min\{y_t,D_t\}\right]-\sum_{t=1}^T\left[hy_{\star}-(h+p)\min\{y_{\star},D_t\}\right]\right] \nonumber\\
&=\E\left[ \sum_{t=1}^T\left[h\left(y_t-y_{\star}\right)\right]-\sum_{t=1}^T(h+p) \left(\min\{y_t,D_t\}-\min\{y_{\star},D_t\}\right)\right] \nonumber\\
&\leq  \max\{h,p\} \cdot  \E \left[\sum_{t=1}^T\left|\E\left[y_t\right]-y_{\star}\right|\right] \label{pf-regret-a-sketch} \\
& = \max\{h,p\} \cdot \sum_{t=1}^T \E \left[\left|\E\left[y_t\right]-y_{\star}\right|\right] . \nonumber
\end{align}
\end{subequations}
Inequality (\ref{pf-regret-a-sketch}) comes from the following case discussion on $\min\{y_t,D_t\}-\min \left\{y_{\star}, D_t\right\}$: 

\textbf{Case 1: }$D_t>y_t$: In this case, $\min \left\{y_t, D_t\right\}-\min \left\{y_{\star}, D_t\right\} = y_t - \min \left\{y_{\star}, D_t\right\} \ge y_t - y_{\star}$. Then $\mathbb{E} [ h(y_t -y_{\star}) - (h+p)(\min \left\{y_t, D_t\right\}-\min \left\{y_{\star}, D_t\right\})]  \le -p \mathbb{E}[y_t - y_{\star} ] 
     = -p \mathbb{E}[\mathbb{E}[y_t] - y_{\star} ] 
     \le p \mathbb{E}[\left|\mathbb{E}[y_t] - y_{\star}\right|]$.

\textbf{Case 2: }$D_t\le y_t$: In this case $\min \left\{y_t, D_t\right\}-\min \left\{y_{\star}, D_t\right\} = D_t - \min \left\{y_{\star}, D_t\right\} \ge 0$. Similarly, $\mathbb{E} [ h(y_t -y_{\star}) - (h+p)(\min \left\{y_t, D_t\right\}-\min \left\{y_{\star}, D_t\right\})] \le h \mathbb{E}[y_t - y_{\star} ] 
    = h \mathbb{E}[\mathbb{E}[y_t] - y_{\star} ] 
     \le h \mathbb{E}[\left|\mathbb{E}[y_t] - y_{\star}\right|]$.

Altogether, we show that regret analysis can be transformed into the convergence analysis of the posterior parameter. 
\subsection{\textbf{Putting All Together}} \label{subsec: everything}
Denote $C_0 = \max\{h,p\}\left(-\ln (\frac{h}{p+h})\right)^{1 / k}\left(\overline{D}^k+\frac{2}{\theta_{\star}}\right)\frac{1}{k} \cdot \min \left \{L, \frac{1}{\theta_*} \right \}^{1/k - 1} \sqrt{2\ln{\left(\frac{T}{\delta}\right)}}$.

Altogether, we have
\begin{equation*}
    \begin{aligned}    \operatorname{Regret(T,\theta_{\star})} 
     & \le \max\{h,p\} \cdot \sum_{t=1}^T \E \left[\left|\E\left[y_t\right]-y_{\star}\right|\right] \qquad \text{(Lipchitz Continuity)} \\
     & \le \max\{h,p\} \cdot \left(-\ln (\frac{h}{p+h})\right)^{1 / k} \sum_{t=1}^T \left(\overline{D}^k+\frac{2}{\theta_{\star}}\right)\sqrt{2\ln{\left(\frac{T}{\delta}\right)}}\sqrt{\frac{t}{\left(\alpha_t-1\right)^2}} \qquad \text{(Analysis of
$\left|\E\left[y_t\right]-y_{\star}\right|$)} \\
& \le C_0 \cdot \left(512 \left(1-\exp\{-\theta_{\star} L^k\}\right)^{-3} \cdot \ln\left( \frac{T}{\delta}\right)^{\frac{3}{2}} + 4\left(1-\exp\{-\theta_{\star} L^k\}\right)^{-1}\sqrt{T}\right) \qquad \text{($y_t$ lower bound)}.
      \end{aligned}
\end{equation*}

As shown, the three inequalities correspond precisely to the three key steps in Section \ref{subsubsec:lower bound action}, Section \ref{subsubsec:confidence}, and Section \ref{subsubsec:Lipchitz regret}. 

 \section{Extension beyond Weibull Distribution}
In this section, we provide analysis for Bayesian regret for arbitrary demand distribution. We first denote $\hat{F}_t$ to be estimated CDF of demand and $F_{\star}$ be the true demand CDF for unknown arbitrary demand distribution. 
\begin{assumption}
\label{ass:real}
    There exists a parameter $\theta_{\star} \in \Theta$ such that $F_{\theta_{\star}} = F_{\star}$.
\end{assumption}
\begin{assumption}
\label{ass:lip parameter}
    For the estimator $\hat{\theta}$ of the unknown parameter $\theta_{\star}$, we assume     
    $|\hat{\theta}-\theta_{\star}|\leq C_1 \cdot \|F_{\hat{\theta}}-F_{\star}\|_\infty$,
\end{assumption}
\begin{assumption}
\label{ass:cost function theta}
The newsvendor cost function satisfies $|g_{\theta_1}(y, D) - g_{\theta_2}(y, D)| \leq C_2 \cdot |\theta_1 -\theta_2|$.
\end{assumption}

Combing Assumption \ref{ass:lip parameter} with Assumption \ref{ass:cost function theta}, we can get $|g_{\theta_1}(y, D) - g_{\theta_2}(y, D)| \leq C_1C_2 \cdot \|F_{\theta_1} -F_{\theta_2}\|_\infty$.

\subsection{Kaplan-Meier Estimator and Confidence Interval}
\begin{definition}\cite{huh2011adaptive}
\label{km estimator}
\begin{align*}
     1-\hat{F}_t(x)=\prod_{s: Y_{(s)} \leqslant x}\left(\frac{t-s}{t-s+1}\right)^{\delta_{(s)}}
\end{align*}
   
\end{definition}
Notice that for $1 \le s \le t \le T$,  $Y_{(s)}$ and $\delta_{(s)}$ denote order statistics. Specifically, consider $T$ observations $\left\{\left(Y_1, \delta_1\right), \ldots,\left(Y_T, \delta_T\right)\right\}$, some of which are possibly censored (i.e., with $\delta_t=0$). To construct the KM estimator, we order the $T$ observations from smallest to largest according to the value of $Y_t$ 's. Ties are broken by placing uncensored observations ($\delta_t=1$) before censored ones ($\delta_t=0$).
\begin{definition}[Greedy Plug-in Estimator]
\label{def:theta_hat}
    At each time $t$, we define a greedy parameter estimate $\hat{\theta}_t$ that minimizes the distance to the current empirical distribution $\hat{F}_t$ under KM estimator  , and the corresponding plug-in estimator $F_{\hat{\theta}_t}$ such that $\|F_{\hat{\theta}_t} - \hat{F}_t\|_{\infty} \le \|F_{\theta_{\star}} - \hat{F}_t\|_{\infty}$.
\end{definition}

\begin{lemma}\cite{bitouze1999dvoretzky}
According to Theorem 1 of \cite{bitouze1999dvoretzky} we have, 
for each $t \in [T]$ and     let $\hat{F}_t$ be the Kaplan-Meier estimator of the distribution function $F$. And $G$ is the censoring distribution function. There exists an absolute constant $C$ such that, for any positive $\lambda$,

$$
\mathbb{P}\left(\sqrt{t}\left\|(1-G)\left(\hat{F}_t-F\right)\right\|_{\infty}>\lambda\right) \leqslant 2.5 \mathrm{e}^{-2 \lambda^2+C \lambda} .
$$ 
\end{lemma}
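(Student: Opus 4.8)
The plan is to prove the inequality by linearizing the Kaplan--Meier estimator around the true distribution $F$: after multiplying by the censoring survival function $1-G$, the leading term becomes an ordinary empirical-distribution-function process, to which Massart's sharp Dvoretzky--Kiefer--Wolfowitz bound (constant $2$) applies, while the nonlinear remainder is absorbed into a crude exponential martingale bound that produces the extra $\mathrm{e}^{C\lambda}$ factor and the constant $2.5$ in place of $2$.

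First I would pass to the counting-process formulation: set $N_t(s)=\sum_{i=1}^t\mathbf{1}[Y_i\le s,\ \delta_i=1]$ and the at-risk process $R_t(s)=\sum_{i=1}^t\mathbf{1}[Y_i\ge s]$, let $\Lambda$ be the cumulative hazard of $F$ and $\hat{\Lambda}_t(s)=\int_0^s R_t(u)^{-1}\,dN_t(u)$ the Nelson--Aalen estimator, so that $M_t(s)=N_t(s)-\int_0^s R_t(u)\,d\Lambda(u)$ is a martingale for the natural filtration, with jumps bounded by $1$ and predictable variation $\int_0^s R_t(u)(1-\Delta\Lambda(u))\,d\Lambda(u)$. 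Second, I would invoke the Duhamel / product-integral identity
\[
\frac{1-\hat{F}_t(x)}{1-F(x)}-1=-\int_{(0,x]}\frac{1-\hat{F}_t(u^-)}{1-F(u)}\,d(\hat{\Lambda}_t-\Lambda)(u),
\]
substitute $\hat{\Lambda}_t-\Lambda=\int R_t^{-1}\,dM_t$ on $\{R_t>0\}$, and thereby write $\sqrt{t}\,(1-G(x))\,(\hat{F}_t(x)-F(x))$ as the integral over $(0,x]$ of the bounded predictable weight $(1-F(x))(1-G(x))\,\frac{1-\hat{F}_t(u^-)}{1-F(u)}\,\frac{t}{R_t(u)}$ against $t^{-1/2}\,dM_t(u)$. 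Because $R_t(u)/t$ concentrates around $(1-F(u^-))(1-G(u^-))$, the factor $(1-G(u))\,t/R_t(u)$ is, up to a controllable random error, of order $(1-F(u))^{-1}$, keeping the weight uniformly bounded on the relevant range.

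Third, I would split this integral into (i) a main term obtained by freezing $1-\hat{F}_t(u^-)\mapsto 1-F(u)$ and $R_t(u)/t\mapsto(1-F(u^-))(1-G(u^-))$, which, by the very choice of the $1-G$ weighting, is a rescaling of the classical empirical process $\sqrt{t}\,(\mathbb{F}_t(x)-F(x))$ of i.i.d.\ observations on $\mathbb{R}$, so that $\sup_x$ of it obeys Massart's $2\mathrm{e}^{-2\lambda^2}$ bound and supplies the $2\lambda^2$ exponent and the leading constant; and (ii) a remainder collecting the two replacement errors. The remainder is of smaller stochastic order, and I would bound it with a Freedman/Bernstein-type exponential inequality for the bounded martingale $M_t$, together with the DKW control of $\|\hat{F}_t-F\|_\infty$ and a concentration bound for $\inf_u R_t(u)/t$ over the support, plus a union bound over a finite grid in $x$; this is what costs the slack $2.5/2$ and the linear term $\mathrm{e}^{C\lambda}$.

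The hard part will be the right-tail control: as $x$ approaches the upper endpoint of the observable support, $R_t(x)$ thins out and $t/R_t(x)$ diverges, so the boundedness claims above fail pointwise there. The whole purpose of the factor $1-G$ is that it decays in exactly this region and damps the blow-up, but turning this heuristic into a rigorous bound --- for instance by showing $(1-G(u))\big/(R_t(u)/t)$ stays bounded with overwhelming probability, or by truncating the integral at a data-dependent level and controlling the discarded part --- is delicate, and it is precisely this step that forces one to relinquish the pristine DKW form and concede the $\mathrm{e}^{C\lambda}$ correction while retaining the sharp constant $2$ in front of $\lambda^2$.
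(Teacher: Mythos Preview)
The paper does not prove this lemma at all: it is stated as a direct quotation of Theorem~1 in \cite{bitouze1999dvoretzky} and is used as a black-box input to derive Lemma~\ref{lemma:km confidence}. There is therefore no ``paper's own proof'' to compare against---your entire sketch is outside the scope of what the paper supplies.

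That said, your sketch is a plausible high-level outline of a proof strategy for a DKW-type bound on the Kaplan--Meier estimator: the counting-process/Duhamel linearization, the decomposition into a frozen leading term plus a remainder, and the observation that the $(1-G)$ weight is precisely what tames the right-tail blow-up are all sound intuitions. But it remains a sketch rather than a proof. In particular, the ``main term'' you obtain after freezing is \emph{not} literally the empirical process of i.i.d.\ observations of $F$---it is a martingale integral whose variance matches that process only after the substitution $R_t/t\approx(1-F)(1-G)$, so invoking Massart's constant $2$ directly is not justified without further work; and the remainder analysis (uniform control of $(1-G)t/R_t$, grid/chaining over $x$, and the interaction between the two replacement errors) is where essentially all the difficulty of the original Bitouz\'e--Laurent--Massart argument lies, and you have only named the issues, not resolved them. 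For the purposes of this paper none of that is needed: simply cite the result as the authors do.
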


\begin{lemma}
\label{lemma:km confidence}
For each $t \in [T]$ and arbitrary x and $0 <G(x) <1$. and set $\epsilon=\sqrt{\frac{1}{2 t} \ln \left(\frac{1}{\delta}\right)}$, we have 
 \begin{align*}
   \|\hat{F}_t(x)-F(x)\|_{\infty} \leq \frac{\epsilon}{1-G(x)}.
\end{align*}
\end{lemma}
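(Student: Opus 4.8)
\textbf{Proof proposal for Lemma \ref{lemma:km confidence}.}

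The plan is to reduce the claimed bound to the concentration inequality of \cite{bitouze1999dvoretzky} stated just above, which controls the weighted sup-norm $\sqrt{t}\,\|(1-G)(\hat F_t - F)\|_\infty$. First I would fix the confidence level: setting the right-hand side $2.5\,e^{-2\lambda^2 + C\lambda} = \delta$ and solving (or more simply bounding) for $\lambda$ shows that one may take $\lambda$ of order $\sqrt{\ln(1/\delta)}$, so that with probability at least $1-\delta$ we have $\sqrt{t}\,\|(1-G)(\hat F_t - F)\|_\infty \le \lambda$. Dividing through by $\sqrt{t}$ gives $\|(1-G)(\hat F_t - F)\|_\infty \le \lambda/\sqrt{t}$, and matching this with the stated $\epsilon = \sqrt{\tfrac{1}{2t}\ln(1/\delta)}$ is just a matter of absorbing the absolute constant $C$ and the factor $2.5$ into the choice of $\lambda$ (this is where the inequality becomes an ``up to constants'' statement; a fully rigorous version would carry an extra additive $C\lambda$ term, so I would either state the bound with a generic constant or note that for $\delta$ small enough the linear term is dominated by the quadratic one).

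Next I would remove the weight $(1-G)$ to pass from the weighted norm to the pointwise bound claimed. Since $G$ is a CDF, $1-G(x) \in [0,1]$, and at any point $x$ with $0 < G(x) < 1$ we have $1 - G(x) > 0$, so
\begin{align*}
|\hat F_t(x) - F(x)| = \frac{(1-G(x))\,|\hat F_t(x) - F(x)|}{1-G(x)} \le \frac{\|(1-G)(\hat F_t - F)\|_\infty}{1-G(x)} \le \frac{\epsilon}{1-G(x)},
\end{align*}
which is exactly the asserted inequality. I would be slightly careful about the use of $\|\cdot\|_\infty$ on the left: as written the statement fixes an $x$ with $0<G(x)<1$ and bounds $|\hat F_t(x)-F(x)|$ by $\epsilon/(1-G(x))$; if instead one wants a genuine sup over $x$ this has to be restricted to the region where $G$ is bounded away from $1$, otherwise the bound blows up — so I would phrase the conclusion pointwise (or over a region $\{x : G(x) \le 1-c\}$) to be safe.

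The only real subtlety — hence the ``main obstacle'' — is the constant bookkeeping in inverting $2.5\,e^{-2\lambda^2+C\lambda}=\delta$: the clean choice $\lambda = \sqrt{\tfrac12\ln(1/\delta)}$ does not literally satisfy the equation because of the $+C\lambda$ and the factor $2.5$. I would handle this by either (i) stating the lemma with $\epsilon = \Theta\!\big(\sqrt{\tfrac1t\ln(1/\delta)}\big)$ and an unspecified absolute constant, or (ii) restricting to $\delta \le \delta_0$ for a fixed threshold so that $2\lambda^2$ dominates $C\lambda + \ln 2.5$, after which the stated form of $\epsilon$ is valid. Everything else is a one-line division and the trivial observation $1-G(x)\in(0,1]$, so once the constant is pinned down the proof is complete.
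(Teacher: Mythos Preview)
Your proposal is correct and follows essentially the same route as the paper: invoke the Bitouz\'e--Laurent--Massart inequality to bound $\sqrt{t}\,\|(1-G)(\hat F_t-F)\|_\infty$ with high probability, then divide pointwise by $1-G(x)$. The paper's own proof is terser and simply sets $\epsilon=\sqrt{\tfrac{1}{2t}\ln(C/\delta)}$ without further comment on the $+C\lambda$ term or the factor $2.5$; your discussion of that constant bookkeeping is in fact more careful than what the paper provides, and your pointwise-versus-sup remark is also a valid clarification that the paper glosses over.
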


\subsection{Regret Analysis}

\begin{theorem}[Bayesian Regret for general demand]
\label{thm:general bayesian regret_bound}
Under Assumptions~\ref{ass:real} and~\ref{ass:lip parameter}, and with the KM-based plug-in estimator defined in Definition~\ref{def:theta_hat}, the Bayesian regret of the policy $\pi$ over $T$ periods satisfies the following bound with probability at least $1 - \delta$:
\begin{align*}
\operatorname{BayeRegret}(T, \pi) 
&\le \frac{16L(h+p)}{1 - G_{\max}} \sqrt{T \ln(1/\delta)},
\end{align*}
where $G_{\max} < 1$ is an upper bound on the CDF $G(x)$ over the demand support.
\end{theorem}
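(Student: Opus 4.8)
\textbf{Proof proposal for Theorem \ref{thm:general bayesian regret_bound}.}

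The plan is to mirror the three-step architecture of the Weibull proof (Lipschitz regret decomposition, confidence analysis of the estimator, uniform lower bound of the actions), but with the parametric-model-specific pieces replaced by the general-purpose tools assembled in this section: the greedy plug-in estimator of Definition~\ref{def:theta_hat}, the Kaplan--Meier concentration inequality of Lemma~\ref{lemma:km confidence}, and the composite Lipschitz bound $|g_{\theta_1}(y,D)-g_{\theta_2}(y,D)|\le C_1C_2\|F_{\theta_1}-F_{\theta_2}\|_\infty$ obtained from Assumptions~\ref{ass:lip parameter} and~\ref{ass:cost function theta}. First I would decompose the Bayesian regret period-by-period. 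Since $y_t$ is chosen optimally for the sampled parameter $\theta_t$ and, under Thompson sampling, $\theta_t$ and $\theta_\star$ are identically distributed conditionally on the history, the per-period regret $\mathbb{E}[g_{\theta_\star}(y_t,D_t)-g_{\theta_\star}(y_\star,D_t)]$ can be written as $\mathbb{E}[g_{\theta_\star}(y_t,D_t)-g_{\theta_t}(y_t,D_t)]$ plus a term that telescopes to zero by the posterior-sampling identity (as in Russo--Van Roy); applying the composite Lipschitz bound then yields a per-period regret of at most $C_1C_2\,\mathbb{E}\|F_{\theta_t}-F_{\theta_\star}\|_\infty$, up to constants. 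Alternatively, and more in the spirit of Section~\ref{subsubsec:Lipchitz regret}, I would use the $(h+p)$-Lipschitz continuity of $y\mapsto\min\{y,D\}$ together with the newsvendor identity $g(y,D)-pD = hy-(h+p)\min\{y,D\}$ to bound the regret by $(h+p)\sum_t \mathbb{E}|y_t-y_\star|$, then translate $|y_t-y_\star|$ into a sup-norm CDF distance via the quantile map.

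Second, I would control $\mathbb{E}\|F_{\theta_t}-F_\star\|_\infty$ (equivalently $\mathbb{E}|y_t-y_\star|$). By the triangle inequality and the defining property of the greedy plug-in estimator, $\|F_{\hat\theta_t}-F_\star\|_\infty \le \|F_{\hat\theta_t}-\hat F_t\|_\infty + \|\hat F_t-F_\star\|_\infty \le 2\|\hat F_t-F_\star\|_\infty$, and Lemma~\ref{lemma:km confidence} bounds the latter by $\epsilon/(1-G(x)) \le \sqrt{\tfrac{1}{2t}\ln(1/\delta)}\,/(1-G_{\max})$ uniformly in $x$, with probability at least $1-\delta$ (after a union bound over $t\in[T]$, absorbing the resulting $\ln T$ into constants or keeping $\delta$ fixed as stated). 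Because Thompson sampling draws $\theta_t$ from the posterior, which under realizability concentrates around $\theta_\star$, I would argue that $\|F_{\theta_t}-F_\star\|_\infty$ is of the same order as $\|F_{\hat\theta_t}-F_\star\|_\infty$ in expectation; this is the step that needs care, since it requires the posterior spread to be no larger than the frequentist estimation error — precisely the ``automatic compensation'' phenomenon established in the Weibull case, here invoked at the level of the general parametric family. Summing $\sqrt{1/(2t)}$ over $t=1,\dots,T$ gives $\sum_t t^{-1/2}\le 2\sqrt{T}$, so $\sum_t \mathbb{E}\|F_{\theta_t}-F_\star\|_\infty \lesssim \sqrt{T\ln(1/\delta)}/(1-G_{\max})$.

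Third, I would reinstate the uniform lower bound $L$ on the order quantities, which is what makes the censoring distribution $G$ nondegenerate ($G(x)<1$ on the relevant range) and also supplies the Lipschitz constant for the quantile inversion $|y_t-y_\star| \le (\text{const})\cdot L^{\text{-ish factor}}\cdot\|F_{\theta_t}-F_{\theta_\star}\|_\infty$; the factor $L$ appearing in the numerator of the bound in the theorem statement comes from exactly this quantile-to-CDF conversion (the slope of $F^{-1}$ near the critical quantile, bounded using the lower support $L$). Combining: $\operatorname{BayeRegret}(T,\pi) \le (h+p)\sum_t\mathbb{E}|y_t-y_\star| \le (h+p)\cdot (\text{const}\cdot L)\cdot \tfrac{2}{1-G_{\max}}\cdot 2\sqrt{T\ln(1/\delta)}$, which is the claimed $\tfrac{16L(h+p)}{1-G_{\max}}\sqrt{T\ln(1/\delta)}$ once the constants $C_1,C_2$ and the various $2$'s are tracked (and, implicitly, normalized so that $C_1C_2$ and the quantile slope are absorbed into the constant $16$).

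The main obstacle I anticipate is the second step: rigorously passing from the concentration of the \emph{estimator} $\hat\theta_t$ (a deterministic function of the censored history) to control of the \emph{sampled} parameter $\theta_t$ (a random draw from the posterior). In the Weibull section this is handled by the explicit Gamma-posterior calculation (Lemmas~\ref{lower bound of y_t} and~\ref{beta/alpha-1}); for a general parametric family one needs either a quantitative Bernstein--von Mises-type statement or an assumption bounding the posterior concentration rate by the KM rate. I would either add such an assumption explicitly or restrict the claim to the regime where the posterior variance is dominated by $1/t$, and I would flag that the union bound over $t$ strictly speaking costs a $\sqrt{\ln T}$ factor, so the clean $\sqrt{T\ln(1/\delta)}$ form holds either with an extra logarithmic factor or under a per-round (rather than uniform-in-$t$) high-probability statement. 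A secondary, more routine obstacle is bookkeeping the constant so that it comes out as exactly $16$ rather than some other absolute constant; I would present the bound as $O\!\big(\tfrac{L(h+p)}{1-G_{\max}}\sqrt{T\ln(1/\delta)}\big)$ and note that the constant can be taken to be $16$ under the stated normalizations of $C_1,C_2$.
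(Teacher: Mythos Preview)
Your proposal diverges from the paper's argument in its central decomposition, and this divergence is precisely what creates the gap you correctly flag in your second step. The paper does not attempt to control $\|F_{\theta_t}-F_\star\|_\infty$ for the \emph{sampled} parameter $\theta_t$. Instead it follows the Russo--Van Roy UCB scheme, taking the surrogate to be $U_t(y)=\hat g(y,D_t)$, the empirical newsvendor cost built from the Kaplan--Meier CDF $\hat F_t$. Because $U_t$ is $H_{t-1}$-measurable, the per-period Bayesian regret is split as $[g(y_t,D_t)-U_t(y_\star)]+[U_t(y_t^{km})-g(y_\star,D_t)]$ (with the posterior-sampling/MMSE argument justifying the passage through $y_t^{km}$), and each piece is bounded via the cost-Lipschitz lemma $|g_{\theta_1}-g_{\theta_2}|\le (h+p)\|F_{\theta_1}-F_{\theta_2}\|_\infty$ together with Lemma~\ref{lemma:km confidence}, yielding $O(\epsilon_t/(1-G_{\max}))$ per period. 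No Bernstein--von Mises step enters: the sampled $\theta_t$ is never compared to $\hat\theta_t$, because the decomposition is centered at the KM estimator rather than at the posterior draw. Your first alternative (telescoping via $g_{\theta_t}$) gestures toward this, but by centering at $g_{\theta_t}$ instead of $\hat g$ you reintroduce exactly the posterior-spread problem that the paper's choice of $U_t$ is designed to sidestep.

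You also misread the symbol $L$ in the theorem: here it is the Lipschitz constant from Assumption~\ref{ass:lip parameter} (the paper's proof writes $|\hat\theta_t-\theta_\star|\le L\|F_{\hat\theta_t}-F_\star\|_\infty$), not the uniform action lower bound from the Weibull section. Consequently your third step---reinstating a lower bound on $y_t$ and extracting a quantile-slope factor---is not part of the paper's argument for this theorem; the paper simply posits $G_{\max}<1$ as part of the hypothesis and handles the action-to-parameter conversion entirely through the cost Lipschitz constant. Your instinct that an action lower bound is what would \emph{justify} $G_{\max}<1$ is reasonable, but the paper treats it as a standing assumption rather than something to be derived here.
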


The complete proof for Theorem \ref{thm:general bayesian regret_bound} is in Appendix \ref{appendix:proof for general bayesian regret}. We provide the three key steps here. 

\paragraph{Step 1: Establishing the Plug-in Estimator Properties} 

By definition \ref{def:theta_hat}, Lemma \ref{lemma:km confidence} and assumption \ref{ass:real}, we have $\|F_{\hat{\theta}_t}-\hat{F}_t\|_{\infty} \leq\|F_{\theta_{\star}}-\hat{F}_t\|_{\infty}=\|F_{\star}-\hat{F}_t\|_{\infty}$.
This gives us $\|F_{\hat{\theta}_t}-F_{\star} \|_{\infty}\leq 2\|\hat{F}_t-F_{\star}\|_{\infty}$.
Therefore $|\hat{\theta}_t-\theta_{\star} |\leq L\|F_{\hat{\theta}_t}-F_{\star}\|_{\infty}$.
We define cost functions $g_{\hat{\theta}_t}(y_t, D_t)$, $\hat{g}(y_t, D_t)$, and $g_{\theta_{\star}}(y_t, D_t)$ for the estimator, KM estimator $\hat{\theta_t}$, and true parameter respectively. This yields $\left|g_{\hat{\theta}_t}(y_t, D_t)-\hat{g}(y_t, D_t)\right| \leq\left|g_{\theta_{\star}}(y_{\star}, D_t)-\hat{g}(y_t, D_t)\right|=\left|g(y_{\star}, D_t)-\hat{g}(y_t, D_t)\right|.$  According to \cite{pilz1991bayesian}, the Bayesian estimator is the minimum mean square error estimator of posterior mean. We use KM estimator to upper bound the Bayesian regret per-period.

\paragraph{Step 2: UCB Regret Decomposition} We decompose the per-period Bayesian regret  $g(y_t, D_t) - g(y_\star, D_t)$  as $[g(y_t, D_t) - U_t(y_\star)] + [U_t(y_t^{km}) - g(y_\star, D_t)]$ , here we define an upper confidence bound sequence in \cite{russo2014learning}
 $U_t(y) = \hat{g}(y, D_t)$ is the empirical cost based on the KM estimator and $y_t^{km} = \arg\min_y U_t(y)$ is the KM-optimal decision. 

\paragraph{Step 3: Putting All Together}\begin{lemma}(Lipschitz Property)
The newsvendor cost function satisfies $|g_{\theta_1}(y, D) - g_{\theta_2}(y, D)| \leq (h+p) \cdot \|F_{\theta_1} - F_{\theta_2}\|_{\infty}$.
\end{lemma}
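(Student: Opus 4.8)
The plan is to reduce the statement to the standard integral representation of the expected newsvendor cost as a functional of the demand CDF, and then estimate the resulting integrals by $\|F_{\theta_1}-F_{\theta_2}\|_\infty$. First I would recall the layer-cake identities, valid for any nonnegative demand $D$ with CDF $F_\theta$, that $\mathbb{E}_\theta[(y-D)^{+}]=\int_0^y F_\theta(x)\,dx$ and $\mathbb{E}_\theta[(D-y)^{+}]=\int_y^\infty\bigl(1-F_\theta(x)\bigr)\,dx$; these follow by writing each positive part as $\int_0^\infty \mathbf{1}[\,\cdot\,>s]\,ds$ and applying Fubini. Substituting into the definition~\eqref{cost_function} of the per-period cost gives the closed form $g_\theta(y)=h\int_0^y F_\theta(x)\,dx+p\int_y^\infty\bigl(1-F_\theta(x)\bigr)\,dx$, which exhibits $g_\theta$ as a linear functional of $F_\theta$.

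Next I would subtract the two expressions for parameters $\theta_1,\theta_2$; the additive constant $1$ cancels inside the tail integral, leaving
$$g_{\theta_1}(y)-g_{\theta_2}(y)=h\int_0^y\bigl(F_{\theta_1}(x)-F_{\theta_2}(x)\bigr)\,dx-p\int_y^\infty\bigl(F_{\theta_1}(x)-F_{\theta_2}(x)\bigr)\,dx.$$
Applying the triangle inequality and pulling $\|F_{\theta_1}-F_{\theta_2}\|_\infty$ out of both integrals bounds the left-hand side by $\bigl(h\,\ell_1+p\,\ell_2\bigr)\,\|F_{\theta_1}-F_{\theta_2}\|_\infty$, where $\ell_1$ is the length of $[0,y]$ and $\ell_2$ the length of the portion of the demand support lying above $y$ (beyond the support, $F_{\theta_1}-F_{\theta_2}\equiv 0$, so the infinite tail contributes nothing). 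Under the normalization in force in this section, namely that the demand and hence the relevant range of $y$ lie in $[0,1]$, we have $\ell_1,\ell_2\le 1$, so the prefactor is at most $h+p$, which is exactly the claimed bound. With support contained in $[0,\bar D]$ the same computation yields the constant $(h+p)\bar D$, which is where the factor $L$ in Theorem~\ref{thm:general bayesian regret_bound} comes from.

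The only delicate point, and the step I would be most careful to justify, is the tail integral $\int_y^\infty\bigl(F_{\theta_1}-F_{\theta_2}\bigr)\,dx$: over a genuinely unbounded domain the constant $h+p$ is false, so the argument must invoke boundedness of the demand support (equivalently, compactness of the set of admissible order quantities) to truncate it — I would therefore state that boundedness explicitly as the standing hypothesis. Everything else, the layer-cake identity and the triangle inequality, is routine. As a consistency check one can argue pathwise instead, reading $g_\theta$ through the $\theta$-optimal order $y_\star(\theta)=F^{-1}_\theta\bigl(p/(p+h)\bigr)$ and combining the $\max\{h,p\}$-Lipschitz dependence of $y\mapsto h(y-D)^{+}+p(D-y)^{+}$ with a quantile-stability estimate for $y_\star$; but that route requires a density lower bound and does not give the clean constant $h+p$, so I would keep the distributional argument above as the main proof.
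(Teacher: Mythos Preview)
Your argument is correct and is more direct than what the paper does. The paper does not give a standalone proof of this lemma; instead it \emph{assumes} two Lipschitz conditions, Assumption~\ref{ass:lip parameter} (the parameter is Lipschitz in the CDF) and Assumption~\ref{ass:cost function theta} (the cost is Lipschitz in the parameter), and simply chains them to obtain $|g_{\theta_1}-g_{\theta_2}|\le C_1C_2\|F_{\theta_1}-F_{\theta_2}\|_\infty$; the specific constant $h+p$ is then asserted without derivation. Your layer-cake route bypasses the parameter $\theta$ entirely and exposes the actual mechanism: the expected newsvendor cost is a linear functional of $F_\theta$, so the sup-norm controls the difference once the integration domain has finite length. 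This is both more elementary and more informative, since it pins down exactly where the constant comes from.

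Your caveat about the tail integral is well placed and is a genuine gap in the lemma as stated in the paper: on an unbounded support the inequality with constant $h+p$ is false (take two CDFs differing by $\varepsilon$ over an interval of length $M$ and let $M\to\infty$), and the paper's own Weibull setting has unbounded support. So either a bounded-support hypothesis or the high-probability truncation to $[\underline D,\overline D]$ from Lemma~\ref{lem:demand-range} must be invoked, and you are right to make that explicit. Your identification of the resulting length factor with the constant $L$ in Theorem~\ref{thm:general bayesian regret_bound} is also consistent with how the paper uses the lemma downstream.
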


  We exploit the Lipschitz lemma to convert the KM estimation error $|\hat{F}_t(x) - F_\star(x)| \leq \frac{\epsilon_t}{1-G(x)}$ directly into bounds on cost function differences. For Term I, this gives $g(y_t, D_t) - U_t(y_\star) \leq \frac{L_1 \epsilon_t}{1-G(y_\star)}$, and for Term II, we get $U_t(y_t^{km}) - g(y_\star, D_t) \leq \frac{L_2 \epsilon_t}{1-G_{\max}}$. Assuming that ($G(x) \leq G_{\max} < 1$. Combining both terms and summing over time with $\epsilon_t = \sqrt{\frac{\ln(1/\delta)}{2t}}$ yields the final regret bound of $ \frac{16L(h+p)}{1 - G_{\max}} \sqrt{T \ln(T)}$.

\section{Numerical Experiments}
\label{sec:numerical}

We conduct numerical experiments to evaluate the performance of Thompson Sampling (\texttt{TS}) in the repeated newsvendor problem, comparing it against three benchmarks. The first is the phased-UCB algorithm \citep{agrawal2019learning}, which updates the confidence interval at the start of each epoch (\texttt{UCB}). The second is the non-parametric adaptive policy from \cite{huh2009nonparametric}, which dynamically adjusts orders over time (\texttt{OCO}). The third is the deterministic myopic policy from \cite{besbes2022exploration}, which optimizes single-period inventory decisions without accounting for future learning. To examine the impact of different service levels, defined as \( \gamma = \frac{p}{p+h} \), we fix \( p = 1 \) and vary \( h \) to achieve service levels of 50\%, 90\%, and 98\%. All policies are tested on a common problem instance with prior parameters \( \alpha_0 = \beta_0 = 4 \) and horizon \( T = 600 \). Each algorithm is run over 100 independent trials, and we report average cumulative regret.Results are shown in two plots: (1) a comparison of \texttt{TS}, \texttt{UCB}, and \texttt{OCO} (Figure~\ref{fig:1}), and (2) a comparison of \texttt{TS} and the myopic policy against the optimal cost (Figure~\ref{fig:2}). Across all service levels, \texttt{TS} consistently outperforms the benchmarks and converges faster than the myopic policy, highlighting its effectiveness in balancing exploration and exploitation.

\begin{figure}[htb]
    \centering
    \includegraphics[width=\linewidth]{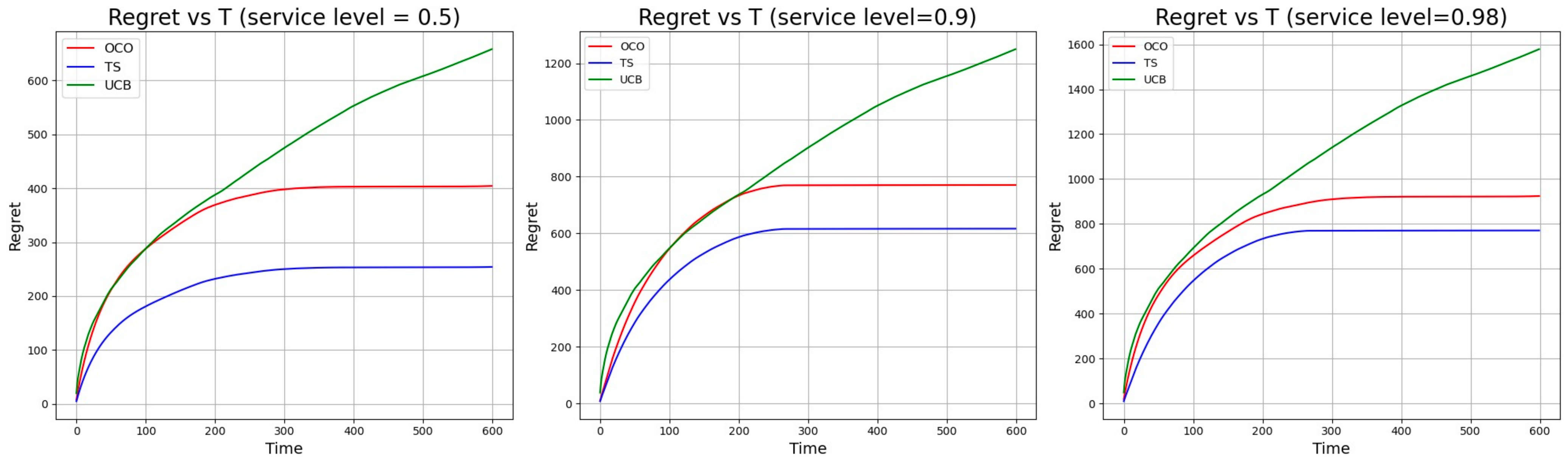}
    \caption{Compare TS with OCO and UCB}
    \label{fig:1}
\end{figure}

\begin{figure}[htb]
    \centering
    \includegraphics[width=\linewidth]{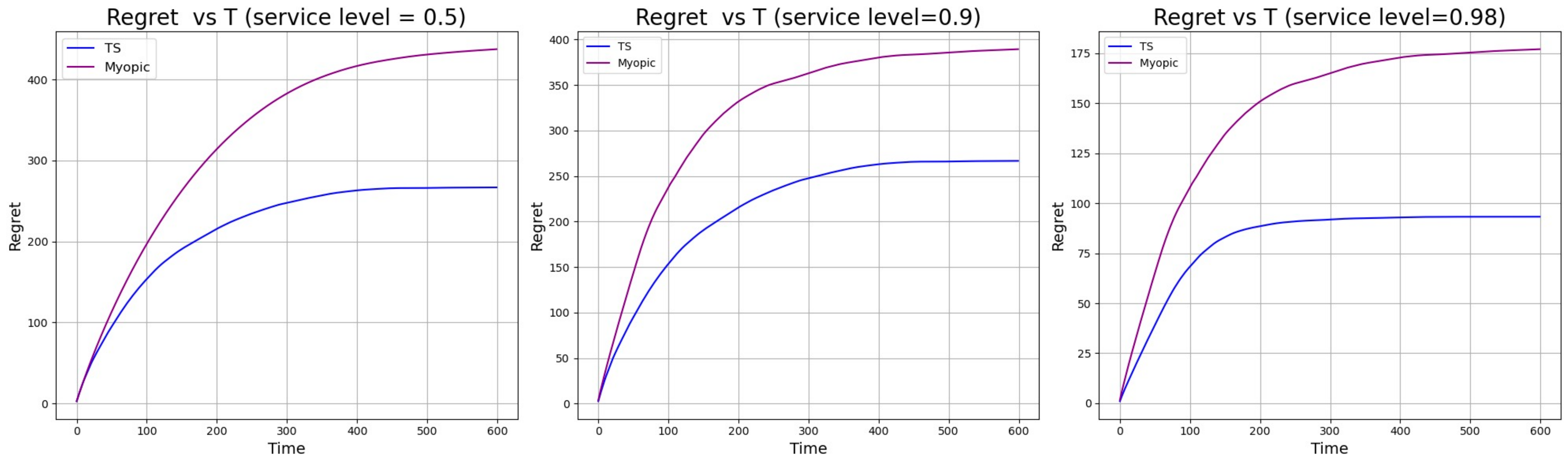}
    \caption{Compare TS with Myopic Policy}
    \label{fig:2}
\end{figure}

\newpage
\section{Conclusions}
\label{sec: conclusion}
We present the first systematic study on applying Thompson Sampling (TS) to the repeated newsvendor problem and provide an initial exploration of how our analytical framework can be extended to broader online learning problems with censored feedback. We establish frequentist regret bounds and offer insights into how TS automatically balances the trade-off between ``large exploration'' and ``optimal exploitation.'' Our analysis follows three key steps, which naturally generalize to broader settings. This work opens up a range of compelling research directions. 

We extend the newsvendor problem to general online learning with censored feedback in Appendix \ref{sec:extension}. We illustrate how our analytical framework naturally extends to broader settings of online learning with censored feedback, making it applicable to a wide range of problems where the feedback is censored.This extension requires only that the regret is Lipschitz continuous with respect to actions and that the relationship between the optimal action and the underlying parameter is continuous and monotone. Lastly, we conduct additional experiments in Appendix \ref{app:additonal experiment}.

\newpage

\bibliographystyle{plainnat}
\bibliography{ref.bib}  

\appendix
\section{Appendix}
\subsection{Extensions to Online Learning with Censored Feedback}
\label{sec:extension}

In this section, we extend the regret analysis of TS for the repeated newsvendor problem to a broader class of online learning algorithms. We consider a setting where the demand $D_t$ in period $t$ is drawn from Weibull distribution parameterized by $\theta_{\star}$. The decision-maker selects an action $A_t$ in each period, resulting in an observed feedback of $\min\{D_t, A_t\}$. The loss incurred in each period is defined as $l(A_t) = \min\{D_t, A_t\}$.
The cumulative regret over $T$ periods is defined as $\operatorname{Regret}(T, \theta_{\star}) = \sum_{t=1}^T \left( l(A_t) - l(A_{\star}) \right)$, where $A_{\star}$ denotes the optimal action that minimizes the expected loss, given by $A_{\star} = \arg\min_{A} \mathbb{E}_{D \sim \theta_{\star}}[l(A)]$.

\subsubsection{Key Assumptions and Results}
We make the following assumptions to facilitate the analysis:

\begin{assumption}[Lipschitz Continuity of Regret]\label{assumption:regret}
    The regret function is Lipschitz continuous with respect to the action, allowing it to be decomposed as:

\[
\operatorname{Regret}(T, \theta_{\star}) \leq C_1 \sum_{t=1}^T \mathbb{E} \left[ \left| \mathbb{E}[A_t] - A_{\star} \right| \right],
\]

where $C_1$ is a positive constant.
\end{assumption}
 Clearly, Assumption \ref{assumption:regret} is precisely the conclusion of key step 1 (see Section \ref{subsubsec:Lipchitz regret}) in our earlier analysis of the repeated newsvendor problem. Consequently, once this assumption is satisfied, no further model requirements are needed to validate the conclusions drawn in key step 1.
\begin{assumption}[Lipschitz Continuity and Monotonicity of $A_t$]\label{assumption:action}
    $A_t$ is non-decreasing with respect to $\frac{1}{\theta_t}$, and the deviation of the expected action from the optimal action is proportional to the estimation error of the parameter $\theta_{\star}$, such that:
\begin{align}\label{eq:lipchitz action}
    \left| \mathbb{E}[A_t] - A_{\star} \right| \le C_2 \left| \mathbb{E}[\frac{1}{\theta_t}] - \frac{1}{\theta_{\star}} \right|,
\end{align}
where $\theta_t$ is the parameter estimate at time $t$, and $C_2$ is a positive constant.
\end{assumption}
Assumption \ref{assumption:action} is satisfied in the repeated newsvendor model. Specifically, Lemma \ref{lem:y_t-and-y_star} shows that the optimal action in the newsvendor problem is given by $y_t(\theta_t)=\frac{1}{\theta_t}\left(-\ln(\frac{h}{p+h}\right)^{1/k}$. Let us show how the the conclusions in  key step 2 and key step 3 hold under this assumption.For the key step 2, the Lipschitz continuity assumption in \eqref{eq:lipchitz action} directly leads to \eqref{eq:lipchitz action 2}. Additionally, Lemma \ref{beta/alpha-1} provides a generic estimation result for censored feedback under the Weibull distribution that is independent of the loss function or algorithm in use. Consequently, the conclusion of key step 2 (Section \ref{subsubsec:confidence}) holds under Assumption \ref{assumption:action}. For key step 3, we observe that the lower bounds for  $\frac{1}{\theta_t}$ (as shown in inequalities \eqref{lb-yt-b} to \eqref{lb-yt-d}) are general results for censored feedback under the Weibull distribution and do not depend on the loss function or the specific algorithm used. Therefore, as long as the positive function $y_t$ is a monotone in $\frac{1}{\theta_t}$, a uniform lower bound for
$y_t$  is guaranteed.
By synthesizing the above analysis on how the conclusions of all three key steps hold, we establish the following theorem on cumulative regret:
\begin{theorem}[Regret of TS for general online learining with censored feedback]Under Assumption \ref{assumption:regret} and Assumption \ref{assumption:action}, we have that 
    \[
\operatorname{Regret}(T, \theta_{\star}) \leq O\left( C_3 \ln(T) \sqrt{T} \right),
\]
where $C_3$ is a positive constant that depends on $C_1$, $C_2$, and the distribution parameters.
\end{theorem}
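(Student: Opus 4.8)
The plan is to reuse, essentially verbatim, the three-step architecture developed for the Weibull newsvendor analysis, invoking Assumptions~\ref{assumption:regret} and~\ref{assumption:action} exactly where the newsvendor-specific structure was used before. The skeleton is: (i) Assumption~\ref{assumption:regret} already gives us the Lipschitz reduction $\operatorname{Regret}(T,\theta_{\star}) \le C_1 \sum_{t=1}^T \mathbb{E}[|\mathbb{E}[A_t] - A_{\star}|]$, which is the analogue of Key Step~3; (ii) Assumption~\ref{assumption:action} then bounds each summand by $C_1 C_2 \sum_{t=1}^T \mathbb{E}[|\mathbb{E}[1/\theta_t] - 1/\theta_{\star}|]$; (iii) from here I would apply Lemma~\ref{beta/alpha-1}, which is stated for the Weibull posterior independently of the loss function, to get $|\mathbb{E}[1/\theta_t]-1/\theta_{\star}| = \tilde{O}\big((\overline{D}^k + 1/\theta_{\star})\sqrt{t}/(\alpha_t-1)\big)$ up to the demand-range control of Lemma~\ref{lem:demand-range}.

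The remaining ingredient is the uniform lower bound on $\alpha_t - 1$, which in turn rests on a uniform lower bound $L$ for the action $A_t$. Here is where the monotonicity half of Assumption~\ref{assumption:action} does the work: since $A_t$ is non-decreasing in $1/\theta_t$, and since Lemma~\ref{lower bound of y_t} gives $1/\theta_t > \beta_t/(2\alpha_t)$ with high probability, the same chain of inequalities \eqref{lb-yt-a}--\eqref{lb-yt-d} — which only used the update rules for $\alpha_t,\beta_t$ (algorithm-independent) and Lemma~\ref{lemma:sequence a and b} — yields a constant lower bound $A_t \ge L$. Plugging this into Lemma~\ref{lemma:alpha_t,beta_t} gives $\alpha_t - 1 \ge \alpha_0 - 1 + \sum_{i=0}^{t-1}(1-e^{-\theta_{\star} L^k}) + M_t$, and the martingale term $M_t$ is controlled by a concentration bound exactly as in the proof of Lemma~11. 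Defining $T_0 = 64(1-e^{-\theta_{\star} L^k})^{-2}\ln(T/\delta)$ and splitting the sum at $T_0$, the $\sum_t \sqrt{t}/(\alpha_t-1)$ term is bounded by $T_0^{3/2}/(\alpha_0-1) + 4(1-e^{-\theta_{\star} L^k})^{-1}\sqrt{T} = \tilde{O}(\sqrt{T})$, which after collecting the $\ln T$ factors and the constants $C_1, C_2, \overline{D}, L, \theta_{\star}$ into a single $C_3$ gives the claimed $O(C_3 \ln(T)\sqrt{T})$.

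The main obstacle — really the only place where something genuinely new must be checked — is verifying that the lower-bound argument of Key Step~1 survives the abstraction. In the newsvendor case the explicit form $y_t = (1/\theta_t)^{1/k}(-\ln\frac{h}{p+h})^{1/k}$ made the monotonicity transparent and also gave the precise exponent that let Lemma~\ref{lemma:sequence a and b} apply to $(\beta_t/\alpha_t)^{1/k}$ with the $\min\{(\beta_0/\alpha_0)^{1/k}, \underline{D}\}$ bound. For a general monotone $A_t = \phi(1/\theta_t)$ one needs $\phi$ to be not merely non-decreasing but bounded below by a positive constant whenever its argument is bounded below by a positive constant; strictly speaking this should be folded into Assumption~\ref{assumption:action} (continuity of $\phi$ at a positive point suffices). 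Assuming that, the rest is a routine transcription, since Lemmas~\ref{lem:demand-range}, \ref{beta/alpha-1}, \ref{lemma:sequence a and b}, and~\ref{lemma:alpha_t,beta_t} are all stated without reference to the newsvendor cost and apply unchanged. I would therefore structure the write-up as three short paragraphs mirroring the three key steps, each beginning ``By Assumption~\ref{assumption:regret}/\ref{assumption:action}\dots'' and pointing to the already-established lemma that closes it.
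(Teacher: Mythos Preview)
Your proposal is correct and matches the paper's own argument essentially line for line: the paper also proceeds by invoking Assumption~\ref{assumption:regret} to replace Key Step~3, the Lipschitz half of Assumption~\ref{assumption:action} to replace the explicit computation in Key Step~2 (pointing to Lemma~\ref{beta/alpha-1} as algorithm-independent), and the monotonicity half of Assumption~\ref{assumption:action} together with the chain \eqref{lb-yt-a}--\eqref{lb-yt-d} to recover the uniform action lower bound $L$ needed for Key Step~1. Your caveat that monotonicity alone does not quite guarantee a \emph{positive} lower bound for $A_t$ from a positive lower bound on $1/\theta_t$ is a fair technical point that the paper itself glosses over.
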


This establishes the $\sqrt{T}-$regret for the general online learning model we considered in this section.

\subsection{Proof for Lemma \ref{lem:demand-range}}
\label{appendix: lemma demand range}
\begin{proof}
Since $D_t \sim \operatorname{Weibull}(\theta_{\star}
    )$, the cumulative distribution function for demand $D_t$ is indicated as $F_{D_t}(x)=1-e^{-\theta_{\star}x^k}$. Then we have
    \begin{align*}
          \mathbb{P}\left(D_t < \underline{D}\right)=1 - e^{-\theta_{\star}\underline{D}^k}  \le  \frac{\delta}{2T}, \qquad \mathbb{P}\left(D_t > \Bar{D}\right)=e^{-\theta_{\star}\Bar{D}^k} \le \frac{\delta}{2T}. 
    \end{align*}
    Choose appropriate $\underline{D},\overline{D}$ that satisfy above two inequalities and we obtain the lemma.
\end{proof}
\subsection{Proof for Lemma \ref{beta/alpha-1}}
\label{sec:appendix-lemma beta/alpha-1}
\begin{proof}

The proof largely follows Lemma B2, B3 in \cite{chuang2023bayesian}. We denote $H = \left\{H_t \right\}$ the natural filtration generated by the right-censored sales data, i.e $H_t=\sigma  \left\{(Y_i,\delta_i) : i \leq t\right\} $, where $Y_t =D_t \wedge y_t$ and $\delta_t =1 \left[D_t <  y_t \right]$.

According to the proof of Lemma B2 and B3 in \cite{chuang2023bayesian}, we have

\begin{align*}
     N_t =\sum_{i=0}^{t-1}\left(Y_i^k-\mathbb{E}\left[Y_i^k \mid \mathcal{H}_{i-1}\right]\right),\qquad  M_t =\sum_{i=0}^{t-1}\left(\delta_i-\mathbb{E}\left[\delta_i \mid \mathcal{H}_{i-1}\right]\right)-1.
\end{align*}
$\left\{M_t\right\}$ and $\left\{N_t\right\}$ are zero-mean martingales given that the true unknown parameter is $\theta_{\star} \in \mathbb{R}_{+}$. We define $A_t=\sum_{i=0}^{t-1}\left(1-e^{-\theta_{\star} y_i^k}\right)$ then

Then we have, 
\begin{align*}
     \frac{\beta_t}{\alpha_t-1} -\frac{1}{\theta_{\star}}  &=\frac{1}{\theta_{\star}}\left(\frac{A_t+\theta_{\star} N_t}{A_t+ M_t-1}-1\right)\\
    &=\frac{1}{\theta_{\star}}\left(\frac{A_t+\theta_{\star} N_t-A_t-M_t+1}{A_t+ M_t-1}\right)\\
    &=\frac{1}{\theta_{\star}}\left(\frac{\theta_{\star} N_t-M_t+1}{A_t+ M_t-1}\right)\\
    &=\frac{N_t-\frac{1}{\theta_{\star}}\left(M_t-1\right)}{\alpha_t-1}.
\end{align*}

From \cite{chuang2023bayesian}, we have 
\begin{align*}
    N_t =\sum_{i=0}^{t-1}\left(Y_i^k-\mathbb{E}\left[Y_i^k \mid \mathcal{H}_{i-1}\right]\right),\qquad M_t-1 =\sum_{i=0}^{t-1}\left(\delta_i-\mathbb{E}\left[\delta_i \mid \mathcal{H}_{i-1}\right]\right)-1
\end{align*}
Therefore,
\begin{align*}
   N_t-\frac{1}{\theta_{\star}}\left(M_t-1\right)=\sum_{i=0}^{t-1}\left(Y_i^k-\frac{\left(\delta_i-1\right)}{\theta_{\star}}-\mathbb{E}_{\theta_{\star}}\left[\left(Y_i^k-\frac{\delta_i}{\theta_{\star}}\right) \mid \mathcal{H}_{i-1}\right]\right)
\end{align*} is a martingale values and satisfy 
\begin{align*}
    & Y_i^k- \frac{\left(\delta_i-1\right)}{\theta_{\star}} \leq \min\{D_i,y_i\}^k+\frac{2}{\theta_{\star}} \leq \overline{D}^k+\frac{2}{\theta_{\star}}
\end{align*}
  
Applying the Azuma–Hoeffding inequality, for $t \in [T]$,  with probability $1-\frac{1}{t^2}$
\begin{align*}
    \mathbb{P}\left( \left|\frac{\beta_t}{\alpha_t-1}- \frac{1}{\theta_{\star}}\right| =\frac{N_t-\frac{1}{\theta_{\star}}\left(M_t-1\right)}{\alpha_t-1} \geq \epsilon_t \right)&=\mathbb{P}\left( \left|  \frac{\beta_t}{\alpha_t-1}- \frac{1}{\theta_{\star}}\right| =N_t-\frac{1}{\theta_{\star}}\left(M_t -1\right)\geq \left(\alpha_t -1 \right)\epsilon_t \right)\\
 & \leq 2\exp{\left(\frac{- \epsilon_t^2 \cdot  \left(\alpha_t-1\right)^2 }{t \cdot\left(\overline{D}^k+\frac{2}{\theta_{\star}}\right)^2}  \right)}
 \end{align*}
 Plug in $\epsilon_t= \sqrt{\ln{\left(\frac{2t^2}{\delta}\right)}}\left(\overline{D}^k+\frac{2}{\theta_{\star}}\right)\sqrt{\frac{t}{\left(\alpha_t-1\right)^2}}$ then we obtain the lemma.
\end{proof} 

\subsection{Proof for Lemma \ref{lower bound of y_t}}
\label{appendix: lower bound of y_t}
\begin{proof}
    Since $\theta_t \sim \operatorname{Gamma}(\alpha_t,\beta_t)$, we have $\frac{1}{\theta_t} \sim \operatorname{InverseGamma}(\alpha_t,\beta_t)$. According to \cite{chen2014concentration} Theorem 20, we have  
    
A random variable $X$ is said to have an inverse gamma distribution if it possesses a probability density function

$$
f(x)=\frac{\beta^\alpha}{\Gamma(\alpha)} x^{-\alpha-1} \exp \left(-\frac{\beta}{x}\right), \quad x>0, \quad \alpha>0, \quad \beta>0
$$

Let $X_1, \cdots, X_t$ be i.i.d. samples of random variable $X$. By virtue of the LR method, we have obtained the following results.

$$
\begin{aligned}
 \mathbb{P}\left\{\bar{X}_n \leq z\right\} \leq\left[\left(\frac{\beta}{\alpha z}\right)^\alpha \exp \left(\frac{\alpha z-\beta}{z}\right)\right]^n \quad \text { for } 0<z \leq \frac{\beta}{\alpha}
\end{aligned}
$$
 $\forall \ t \in [T]$, We plug in $n=1, z=\frac{\beta_t}{2\alpha_t}$ and $\bar{X}_t=\frac{1}{\theta_t}$, then get 
\begin{align*}
    \mathbb{P}\left(\frac{1}{\theta_t} \leq \frac{\beta_t}{2\alpha_t}\right) \leq \left(\frac{2}{e}\right)^{\alpha_t}
\end{align*}
Then, $\forall \ t \in [T], \mathbb{P}\left(\frac{1}{\theta_t} > \frac{\beta_t}{2\alpha_t}\right) \ge 1-\left(\frac{2}{e}\right)^{\alpha_t}$
\end{proof}
\subsection{Proof for Lemma \ref{lemma:sequence a and b}}
\label{appendix: proof sequence a and b}
\begin{proof}
    The proof is straightforward. Denote
    \begin{equation*}
        \min_{i \in [n]: b_i > 0} \left \{ \frac{a_i}{b_i}\right \} = \kappa,
    \end{equation*}
    then $a_i \ge \kappa b_i$ for any $i$ such that $b_i > 0$. Hence,
    \begin{equation*}
        \frac{\sum_{i=1}^n a_i}{ \sum_{i=1}^n b_i} \ge \frac{\sum_{i=1}^n 
 \kappa b_i}{ \sum_{i=1}^n b_i} = \kappa = \min_{i \in [n]: b_i > 0} \left \{ \frac{a_i}{b_i}\right \}.
    \end{equation*}
    This completes the proof.
\end{proof}
\subsection{Proof for Lemma \ref{lem:Mt-confidence}}
\label{appendix:Mt}
\begin{proof}
    Recall that $M_t=\sum_{i=0}^{t-1}\left(\delta_i-\mathbb{E}\left[\delta_i \mid \mathcal{H}_{i-1}\right]\right)$ defined in Lemma \ref{lemma:alpha_t,beta_t} and
$M_t$ is a martingale with bounded increments (specifically, bounded by 2 ), by Azuma's inequality we have,
\begin{align*}
    \mathbb{P}\left(\left|M_t\right| \geq \epsilon\right) \leq 2 \exp \left(-\frac{\epsilon_t^2}{8 t}\right).
\end{align*}
Therefore $\mathbb{P}\left(\left|M_t\right| \geq \sqrt{8t}\ln{\left(
    \frac{2t^2}{\delta}\right)} \right) \leq \frac{\delta}{ t^2}$. 
\end{proof}
\subsection{Proof for Lemma \ref{lemma:km confidence}}
\begin{proof}
    Setting $\epsilon=\sqrt{\frac{1}{2 t} \ln \left(\frac{C}{\delta}\right)}$, then with probability at least $1-\delta$, we have
\begin{align*}
 \sup _x\left|(1-G(x))\left(\hat{F}_t(x)-F(x)\right)\right| \leq \epsilon.    
\end{align*}
Therefore, For any $x$ with $G(x)<1$, the following holds, 
\begin{align*}
   \left|\hat{F}_t(x)-F(x)\right| \leq \frac{\epsilon}{1-G(x)}.
\end{align*}
So:

\begin{align*}
 \hat{F}_t(x)-\frac{\epsilon}{1-G(x)} \leq F(x) \leq \hat{F}_t(x)+\frac{\epsilon}{1-G(x)}. 
\end{align*}
\end{proof}
\subsection{Proof for Theorem \ref{thm:general bayesian regret_bound}}
\label{appendix:proof for general bayesian regret}
\begin{proof}
According to definition \ref{def:theta_hat} and Lemma \ref{lemma:km confidence} combining with assumption \ref{ass:real}, we have
\begin{align*}
    \left|F_{\hat{\theta}_t} - \hat{F}_t\right| \le \left|F_{\theta_{\star}} - \hat{F}_t\right| = \left|F_{\star} - \hat{F}_t\right|.
\end{align*}
Then we have 
\begin{align*}
    F_{\hat{\theta}_t} - \hat{F}_{\star} \le 2 \left|\hat{F}_t-\hat{F}_{\star}\right|.
\end{align*}
Given the Lipchitz assumption (\ref{ass:lip parameter}), we have for each $t \in [T]$,
 \begin{align*}
        \hat{\theta}_t-\theta_{\star} \le L \left|F_{\hat{\theta}_t}-F_{\star}\right|.
    \end{align*}
Since the plug-in estimator $\hat{\theta}$ defined in \ref{def:theta_hat} gives $\theta_{\star}$ a Minimum mean square estimation. Referring to Algorithm \ref{alg:ts for newsvendor} and Using the MSE as risk, the Bayes estimate of the unknown parameter is simply the mean of the posterior distribution and is known as the minimum mean square error (MMSE) estimator.

Then, we define the newsvendor cost function parametrized by estimator $\hat{\theta}_t$ denoted as $g_{\hat{\theta}_t}(y_t,D_t)$. The newsvendor cost function parametrized by km estimator denoted as $\hat{g}(y_t,D_t)$. And the newsvendor cost function parametrized by the function class $\Theta$ denoted as $g_{\theta_{\star}}(y_t,D_t)$, the following inequality holds since the derivative of the newsvendor cost is Lipchitz. 
\begin{align*}
    \left| g_{\hat{\theta}_t}(y_t,D_t)- \hat{g}(y_t,D_t)\right| \le \left|g_{\theta_{\star}}\left(y_{\star}, D_{t}\right) - \hat{g}(y_t,D_t)\right| =\left|g\left(y_{\star}, D_{t}\right)- \hat{g}(y_t,D_t)\right|.
\end{align*}

According to \cite{russo2014learning}, we define $U_t(y):=\hat{g}(y_t,D_t)$ and $y_t^{km} = \arg\min_y \hat{g}(y_t,D_t)$. Then for each $t \in [T]$, we have 
The per-period Bayesian regret defined in \ref{def:bayesian regret} can be decomposed as:
\begin{align*}
g\left(y_t, D_t\right) - g(y_\star, D_t)
 \le [g\left(y_t, D_t\right) - U_t(y_\star)] + [U_t(y_t^{km}) - g(y_\star, D_t)],
\end{align*}
\paragraph{Bounding Term I: $g\left(y_t, D_t\right) - U_t(y_\star)$}

From Lemma~\ref{lemma:km confidence}, for any $x$ such that $G(x)<1$, we have
\[
| \hat{F}_t(x) - F_{\theta_\star}(x) | \le \frac{\epsilon_t}{1 - G(x)}.
\]
By Lipschitz continuity of $g_{\theta}$ in distribution as stated in Assumption~\ref{ass:lip parameter}, we get:
\[
g\left(y_t, D_t\right) - U_t(y_\star) \le |g_{\theta_\star}(y_\star, D_t) - \hat{g}(y_\star, D_t)| \le \frac{L \epsilon_t}{1 - G(y_\star)}.
\]

\paragraph{Bounding Term II: $U_t(y_t^{km}) - g(y_\star, D_t)$}

Using Lemma \ref{lemma:km confidence} and the fact that cost function $U_t(y):=\hat{g}(y_t,D_t)$ is Lipschitz with respect to its first-order derivative $\hat{f}_t$. Then, 
\begin{align*}
    U_t(y_t^{km}) - g(y_\star, D_t) \le \frac{(h+p) \epsilon_t}{1 - G(y_\star)}.
\end{align*}

Combining both terms, we obtain:

\begin{align*}
g\left(y_t, D_t\right) - g(y_\star, D_t)
 \le \frac{L \epsilon_t}{1 - G(y_\star)} + \frac{2(h+p) \epsilon_t}{1 - G(y_t)}.
\end{align*}

Summing over $t = 1$ to $T$ and taking expectations:
\begin{align*}
\operatorname{Regret(T, \pi,\theta_{\star})}&=\mathbb{E}\left[\sum_{t=1}^Tg\left(y_t, D_t\right) - \sum_{t=1}^T g\left(y_{\star}, D_{t}\right)\mid 
 \theta_{\star}\right] \\
&\le \sum_{t=1}^T \mathbb{E} \left[\frac{L \epsilon_t}{1 - G(y_\star)} + \frac{2(h+p)\epsilon_t}{1 - G(y_t)}\right].
\end{align*}

There exists a constant $G_{\max} < 1$ such that $G(x) \leq G_{\max}$ for all $x$ in the support of the demand distribution. 

From  Lemma \ref{lemma:km confidence}, we have:
$$\epsilon_t = \sqrt{\frac{1}{2t} \ln\left(\frac{1}{\delta}\right)}$$

Substituting this into our regret bound:

\begin{align*}
\operatorname{Regret}(T, \pi,\theta_{\star}) &\le \sum_{t=1}^T \mathbb{E} \left[\frac{L \epsilon_t}{1 - G(y_\star)} + \frac{2(h+p)\epsilon_t}{1 - G_{\max}}\right] \\
&= \sum_{t=1}^T \left(\frac{L}{1 - G(y_\star)} + \frac{2(h+p)}{1 - G_{\max}}\right) \sqrt{\frac{1}{2t} \ln\left(\frac{1}{\delta}\right)} \\
&\le  \frac{16L(h+p)}{1 - G_{\max}}  \sqrt{\frac{\ln(1/\delta)}{2}} \sum_{t=1}^T \frac{1}{\sqrt{t}}\\
& \le  \frac{16L(h+p)}{1 - G_{\max}}  \sqrt{\ln(T)T} 
\end{align*}



\end{proof}
\subsection{Auxiliary Lemmas}
\label{appendix: proof for lemma: alpha_t-1}
\begin{lemma}
    \label{lemma: alpha_t-1 }
    Denote 
$$T_0 = 64\left(1-\exp\{-\theta_{\star} L^k\}\right)^{-2}\ln{\frac{T}{\delta}},$$ Therefore,
    \begin{equation*}
    \alpha_t - 1 \ge \begin{cases}
        \alpha_0 - 1 & t \le T_0,\\
        \frac{1}{2} t\left(1-\exp\{-\theta_{\star} L^k\}\right) & t > T_0.
    \end{cases}
\end{equation*}
\end{lemma}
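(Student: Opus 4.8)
\textbf{Proof proposal for Lemma~\ref{lemma: alpha_t-1 }.}

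The plan is to use the martingale decomposition of $\alpha_t$ from Lemma~\ref{lemma:alpha_t,beta_t}, namely $\alpha_t = \alpha_0 + \sum_{i=0}^{t-1}\left(1 - e^{-\theta_{\star} y_i^k}\right) + M_t$, together with the uniform lower bound $y_i \ge L$ established in Key Step~1 (inequality~\eqref{lb-yt-d}) and the concentration of the martingale $M_t$ from Lemma~\ref{lem:Mt-confidence}. The two regimes in the statement correspond to whether the drift term $\sum_{i=0}^{t-1}(1 - e^{-\theta_{\star} y_i^k})$ has grown large enough to dominate the (square-root-sized) martingale fluctuation.

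First I would handle the small-$t$ regime $t \le T_0$: here the claim $\alpha_t - 1 \ge \alpha_0 - 1$ should follow from the fact that, conditioned on the high-probability event that all $y_i \ge L$ (which holds with probability $\ge 1 - \delta$ by a union bound over the events in Lemma~\ref{lower bound of y_t} and Lemma~\ref{lem:demand-range}, using $\alpha_0 \ge \ln(T/\delta)/\ln(e/2)$ so that $(2/e)^{\alpha_t} \le \delta/T$), each increment $\delta_i \ge 0$, hence $\alpha_t = \alpha_0 + \sum_{i=0}^{t-1}\delta_i \ge \alpha_0$, so $\alpha_t - 1 \ge \alpha_0 - 1$ holds deterministically on that event (indeed for all $t$). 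For the large-$t$ regime $t > T_0$, I would lower-bound the drift by $\sum_{i=0}^{t-1}(1 - e^{-\theta_{\star} y_i^k}) \ge t(1 - e^{-\theta_{\star} L^k})$ since $x \mapsto 1 - e^{-\theta_{\star} x^k}$ is increasing and $y_i \ge L$. Then on the event $|M_t| \le \sqrt{8t}\,\ln(2t^2/\delta)$ (or the simpler bound $\sqrt{8t\ln(T/\delta)}$, absorbing constants) from Lemma~\ref{lem:Mt-confidence}, we get
\[
\alpha_t - 1 \ge \alpha_0 - 1 + t(1 - e^{-\theta_{\star} L^k}) - \sqrt{8t\ln(T/\delta)} \ge t(1 - e^{-\theta_{\star} L^k}) - \sqrt{8t\ln(T/\delta)}.
\]
It then remains to check that for $t > T_0 = 64(1 - e^{-\theta_{\star} L^k})^{-2}\ln(T/\delta)$ the square-root term is at most half the linear term, i.e.\ $\sqrt{8t\ln(T/\delta)} \le \tfrac12 t(1 - e^{-\theta_{\star} L^k})$, which rearranges to $t \ge 32(1 - e^{-\theta_{\star} L^k})^{-2}\ln(T/\delta)$ — satisfied with room to spare by the choice of $T_0$. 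This yields $\alpha_t - 1 \ge \tfrac12 t(1 - e^{-\theta_{\star} L^k})$.

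The main obstacle — really a bookkeeping subtlety rather than a deep difficulty — is managing the conditioning: the lower bound $y_i \ge L$ itself holds only on a high-probability event that already depends on the realized demands $D_i \ge \underline{D}$ and on the Gamma samples $1/\theta_i > \beta_i/(2\alpha_i)$, and this event must be intersected with the martingale concentration event for $M_t$, all while keeping the total failure probability at $O(\delta)$ via a union bound over $t \in [T]$. One must also be slightly careful that $M_t$ as defined in Lemma~\ref{lemma:alpha_t,beta_t} includes a $-1$ shift, but since $|{-1}|$ is absorbed into constants this does not affect the bound. Once the event algebra is set up cleanly, both cases reduce to the elementary inequalities above.
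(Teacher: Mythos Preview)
Your proposal is correct and follows essentially the same route as the paper: both condition on the intersection of the high-probability events (demand in range, Gamma sample lower bound, martingale concentration), use Lemma~\ref{lemma:alpha_t,beta_t} to write $\alpha_t-1$ as drift plus martingale, invoke $y_i\ge L$ to lower-bound the drift linearly in $t$, apply Azuma--Hoeffding for $M_t$, and then split at $T_0$ so that for $t>T_0$ the square-root fluctuation is at most half the drift while for $t\le T_0$ one falls back on the deterministic $\alpha_t\ge\alpha_0$. Your bookkeeping remarks about the $-1$ shift in $M_t$ and the union-bound over the conditioning events match the paper's treatment as well.
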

\begin{proof}
    Given above $\alpha_t$ is defined as $\alpha_t = \alpha_0 + \sum_{i=1}^{t} \delta_i$. Given that $\alpha_0 \ge 2$, it follows that $\alpha_t \ge \alpha_0 \ge 2$, and thus $\alpha_t - 1 > 0$ for all $t$.

To facilitate our analysis, we define the following high-probability events:

\begin{equation*}
    \begin{aligned}
    & \xi^{(1)}_{t} = \{\underline{D}\le D_t\le \overline{D} \},\quad \xi^{(2)}_{t} = \left\{\left|\frac{\beta_t}{\alpha_t-1}- \frac{1}{\theta_{\star}}\right| \leq \sqrt{\ln{\left(\frac{2t^2}{\delta}\right)}}\left(\overline{D}^k+\frac{2}{\theta_{\star}}\right)\sqrt{\frac{t}{\left(\alpha_t-1\right)^2}}\right \} \\
    & \xi^{(3)}_{t}= \left \{\left|M_t\right| \leq \sqrt{8t}\ln{\left(
    \frac{2t^2}{\delta}\right)} \right \},\quad \xi^{(4)}_{t}= \left\{\frac{1}{\theta_t} > \frac{\beta_t}{2\alpha_t} \right \},
    \end{aligned}
\end{equation*}
and $\xi^{(1)} = \cap_{t=1}^T \xi^{(1)}_{t}$, $\xi^{(2)} = \cap_{t=1}^T \xi^{(2)}_{t}$, $\xi^{(3)} = \cap_{t=1}^T \xi^{(3)}_{t}$, $\xi^{(4)} = \cap_{t=1}^T \xi^{(4)}_{t}$, $\xi = \cap_{i=1}^4 \xi^{(i)}$. Condition on event $\xi$, we have for all $t\in[T]$, 
\begin{subequations}
\begin{align}
     \alpha_t-1&=\alpha_0+\sum_{i=0}^{t-1}\left(1-e^{-\theta_{\star} y_i^k}\right)+M_t-1 \label{eq:sket-pf-bound-yt-alphat-a}\\
     &\ge \alpha_0-1+\left(t-1\right)\left(1-\exp\{-\theta_{\star} L^k\right)+M_t \label{eq:sket-pf-bound-yt-alphat-b}\\
    & \ge t\left(1-\exp\{-\theta_{\star} L^k\}\right)+\alpha_0-1-\left(1-\exp\{-\theta_{\star} L\}^k\right)-\sqrt{8t}\ln{\left(
    \frac{2t^2}{\delta}\right)} \label{eq:sket-pf-bound-yt-alphat-c} \\
    & \ge t\left(1-\exp\{-\theta_{\star} L^k\}\right)-\sqrt{8t}\ln{\left(
    \frac{2t^2}{\delta}\right)} \nonumber . 
\end{align}
\label{eq:sket-pf-bound-yt-alphat}
\end{subequations}
 (\ref{eq:sket-pf-bound-yt-alphat-a}) is derived from Lemma \ref{lemma:alpha_t,beta_t}. (\ref{eq:sket-pf-bound-yt-alphat-b}) comes from the fact that $y_t \ge L$ for all $t$ when the event $\xi$ holds. (\ref{eq:sket-pf-bound-yt-alphat-c}) comes from the following Lemma \ref{lem:Mt-confidence}, which is proved in Appendix \ref{appendix:Mt}. 
\begin{lemma}
    \label{lem:Mt-confidence}
    For $t \in [T]$,
    \begin{align*}
        \mathbb{P} \left(M_t \geq \sqrt{8t}\ln{\left(
    \frac{2t^2}{\delta}\right)}\right) \leq \frac{\delta}{ t^2}.
    \end{align*}
\end{lemma}

To further analyze $\alpha_t - 1$, we use the technique of truncating $T$ as follows:

Denote 
$$T_0 = 64\left(1-\exp\{-\theta_{\star} L^k\}\right)^{-2}\ln{\frac{T}{\delta}},$$
When $t > T_0$, we have
\begin{equation*}
    \alpha_t-1 \ge t\left(1-\exp\{-\theta_{\star} L^k\}\right)-\sqrt{8t}\ln{\left(
    \frac{2t^2}{\delta}\right)} > \frac{1}{2} t\left(1-\exp\{-\theta_{\star} L^k\}\right).
\end{equation*}
Therfore we have,
\begin{equation*}
    \alpha_t - 1 \ge \begin{cases}
        \alpha_0 - 1 & t \le T_0,\\
        \frac{1}{2} t\left(1-\exp\{-\theta_{\star} L^k\}\right) & t > T_0.
    \end{cases}
\end{equation*}

Finally we discuss the probability of event $\xi$.
\begin{subequations}
    \begin{align}
    \mathbb{P}(\xi) & = 1 - \sum_{i=1}^4 \mathbb{P}(\neg\xi^{(i)}) \nonumber  \\
    & = 1 - \sum_{i=1}^4 \sum_{t=1}^T \mathbb{P}(\neg\xi^{(i)}_t) \nonumber \\
    & \ge 1 - \sum_{t=1}^T \frac{\delta}{T} - \sum_{t=1}^T \frac{\delta}{t^2} - \sum_{t=1}^T \frac{\delta}{t^2}- \sum_{t=1}^T \left(\frac{2}{e} \right)^{\alpha_t} \label{eq:pf-yt-xi-bound-a} \\
    & \ge 1 - \delta - \frac{\pi^2}{6} \delta - \frac{\pi^2}{6} \delta - \delta \label{eq:pf-yt-xi-bound-b} \\
    & \ge 1 - 6 \delta.\nonumber
    \end{align}
    \label{eq:pf-yt-xi-bound}
\end{subequations}
For (\ref{eq:pf-yt-xi-bound-a}), the first term comes from Lemma \ref{lem:demand-range}, the second term comes from Lemma \ref{beta/alpha-1}. The third term comes from Lemma  \ref{lem:Mt-confidence}. The fourth term comes from Lemma \ref{lower bound of y_t}. (\ref{eq:pf-yt-xi-bound-b}) comes from the following, recall $\alpha_0 \ge \frac{\ln{\frac{T}{\delta}}}{\ln{\frac{e}{2}}}$.
    \begin{equation*}
        \sum_{t=1}^T\left(\frac{2}{e}\right)^{\alpha_t} \le \sum_{t=1}^T  \left(\frac{2}{e}\right)^{\alpha_0} = T \cdot \left(\frac{2}{e}\right)^{\alpha_0} = T \cdot e^{-\ln(e/2) \cdot \alpha_0} = T \cdot \left(\frac{\delta}{T} \right) \le \delta. \nonumber
    \end{equation*}

Consequently, with probability $\ge 1 - 6 \delta$,
\begin{equation*}
    \alpha_t - 1 \ge \begin{cases}
        \alpha_0 - 1 & t \le T_0,\\
        \frac{1}{2} t\left(1-\exp\{-\theta_{\star} L^k\}\right) & t > T_0.
    \end{cases}
\end{equation*}
\end{proof}
\newpage
 \subsection{Additional Experiments}
 \label{app:additonal experiment}
 In this section, we conduct numerical experiments for normal distributions. To assess the impact of varying service levels, defined as $ \gamma = \frac{p}{p+h}$, we set $p = 1$ and adjust $h$ to achieve service levels of 50 \%, 90\%, and 98 \%. All policies are evaluated with prior parameters \( \alpha_0 = \beta_0 = 4 \) and time horizon  \( T = 600 \). Each algorithm runs across 100 independent trials, with average cumulative regret reported.
The results are presented in two figures: (1) a performance comparison of TS, UCB, and OCO algorithms (Figure~\ref{fig:3}), and (2) a comparison of TS and the myopic policy relative to optimal cost (Figure~\ref{fig:4}). Across all service levels, TS demonstrates superior performance compared to benchmark methods and achieves faster convergence than the myopic policy, demonstrating its effectiveness in balancing exploration and exploitation.

 \begin{figure}[htb]
    \centering
    \includegraphics[width=\linewidth]{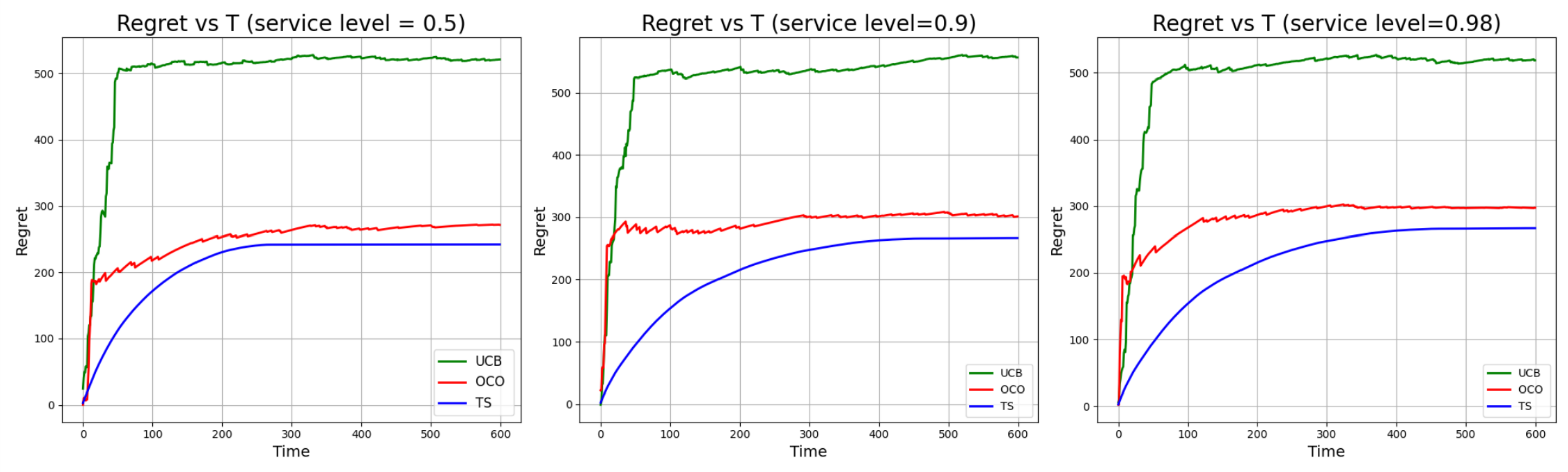}
    \caption{(Normal Distribution) Compare TS with OCO and UCB}
    \label{fig:3}
\end{figure}

\begin{figure}[htb]
    \centering
    \includegraphics[width=\linewidth]{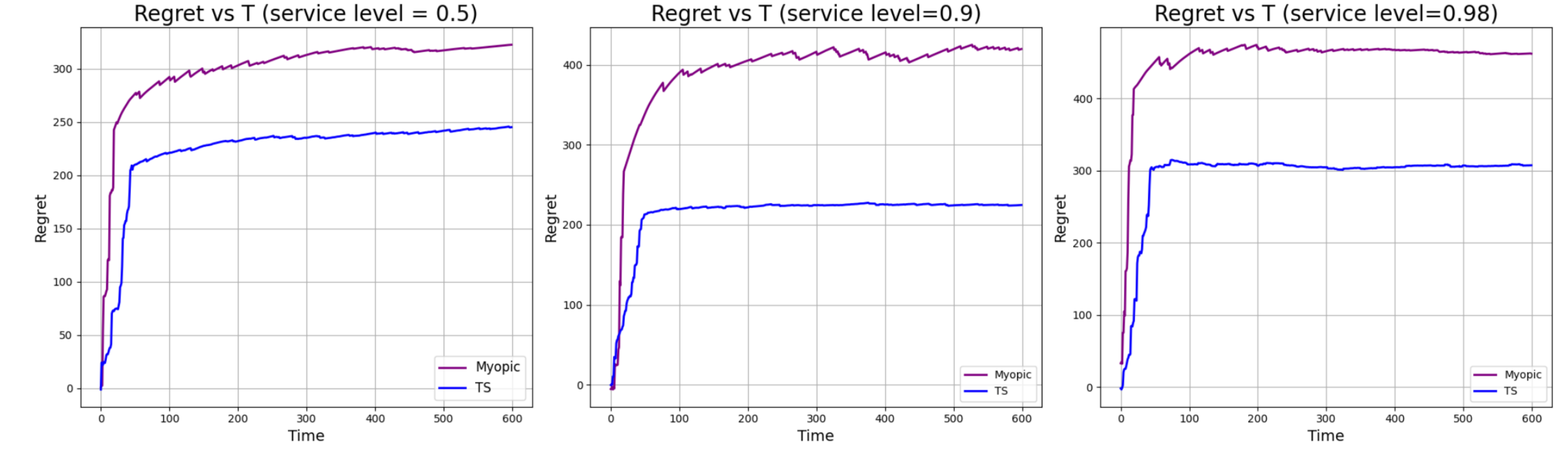}
    \caption{(Normal Distribution) Compare TS with Myopic}
    \label{fig:4}
\end{figure}
\end{document}